\def\eqref#1{equation~\ref{#1}}
\def\1{\bm{1}}
\DeclareMathAlphabet{\mathsfit}{\encodingdefault}{\sfdefault}{m}{sl}
\SetMathAlphabet{\mathsfit}{bold}{\encodingdefault}{\sfdefault}{bx}{n}
\newcommand{\Var}{\mathrm{Var}}
\newcommand{\specialcell}[2][c]{%
  \begin{tabular}[#1]{@{}c@{}}#2\end{tabular}}
\declaretheorem[name=Definition]{definition}
\declaretheorem[name=Lemma]{lemma}
\DeclareMathOperator{\proj}{proj}
\definecolor{mydarkblue}{rgb}{0,0.08,0.45}
\newcommand{\websitelink}{\url{https://sites.google.com/view/vlm-rm}}
\newcommand{\githublink}
{\url{https://github.com/AlignmentResearch/vlmrm}}
\newcommand{\posres}[1]{{\color{ForestGreen} $\mathbf{#1\%}$ }}
\newcommand{\mixedres}[1]{{\color{YellowOrange} $#1\%$ }}
\newcommand{\negres}[1]{{\color{OrangeRed} $#1\%$ }}
\newcommand{\nb}[3]{\ifthenelse{\boolean{include-notes}}{{\colorbox{#2}{\bfseries\sffamily\scriptsize\textcolor{white}{#1}}}{\textcolor{#2}{\sf\small\textit{#3}}}}{}}
\title{Vision-Language Models are Zero-Shot \\ Reward Models for Reinforcement Learning}
\author{%
\setcounter{footnote}{1}%
Juan Rocamonde%
\thanks{Additional affiliation: Vertebra}\protect\phantom{\footnotesize 1}%
\thanks{Correspondence to: \href{mailto:juancarlosrocamonde@gmail.com}{juancarlosrocamonde@gmail.com}, \href{mailto:david.lindner@inf.ethz.ch}{david.lindner@inf.ethz.ch}}\\
FAR AI \And
Victoriano Montesinos \\
Vertebra \And
Elvis Nava \\
ETH AI Center \AND
Ethan Perez\setcounter{footnote}{0}\thanks{Equal contribution} \\
Anthropic\And
\hspace{-2.65cm}David Lindner\footnotemark[1]\protect\phantom{\footnotesize 1}\footnotemark[3]\\
\hspace{-2.65cm}ETH Zurich
}
\begin{document}

\maketitle

\begin{figure}[h]
    \centering
    \includegraphics[width=\linewidth]{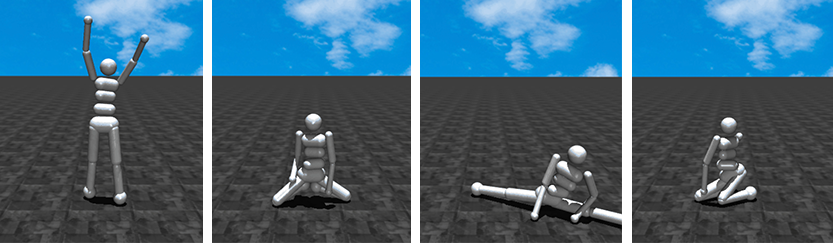}
    \caption{We use CLIP as a reward model to train a MuJoCo humanoid robot to (1) stand with raised arms, (2) sit in a lotus position, (3) do the splits, and (4) kneel on the ground (from left to right). We specify each task using a single sentence text prompt. The prompts are simple (e.g., ``a humanoid robot kneeling'') and none of these tasks required prompt engineering. See \Cref{sec:exp-complex} for details on our experimental setup.}
    \label{fig:humanoid_screenshots}
\end{figure}

\begin{abstract}
\looseness -1
Reinforcement learning (RL) requires either manually specifying a reward function, which is often infeasible, or learning a reward model from a large amount of human feedback, which is often very expensive.
We study a more sample-efficient alternative: using pretrained vision-language models (VLMs) as zero-shot reward models (RMs) to specify tasks via natural language.
We propose a natural and general approach to using VLMs as reward models, which we call VLM-RMs.
We use VLM-RMs based on CLIP to train a MuJoCo humanoid to learn complex tasks without a manually specified reward function, such as kneeling, doing the splits, and sitting in a lotus position. For each of these tasks, we only provide \emph{a single sentence text prompt} describing the desired task with minimal prompt engineering.
We provide videos of the trained agents at: \websitelink\footnote{Source code available at \githublink}.
We can improve performance by providing a second ``baseline'' prompt and projecting out parts of the CLIP embedding space irrelevant to distinguish between goal and baseline.
Further, we find a strong scaling effect for VLM-RMs: larger VLMs trained with more compute and data are better reward models.
The failure modes of VLM-RMs we encountered are all related to known capability limitations of current VLMs, such as limited spatial reasoning ability or visually unrealistic environments that are far off-distribution for the VLM.
We find that VLM-RMs are remarkably robust as long as the VLM is large enough. This suggests that future VLMs will become more and more useful reward models for a wide range of RL applications.
\end{abstract}

\section{Introduction}

Training reinforcement learning (RL) agents to perform complex tasks in vision-based domains can be difficult, due to high costs associated with reward specification.
Manually specifying reward functions for real world tasks is often infeasible, and learning a reward model from human feedback is typically expensive.
To make RL more useful in practical applications, it is critical to find a more sample-efficient and natural way to specify reward functions.

One natural approach is to use pretrained vision-language models (VLMs), such as CLIP~\citep{radford2021learning} and Flamingo~\citep{alayrac2022flamingo}, to provide reward signals based on natural language.
However, prior attempts to use VLMs to provide rewards require extensive fine-tuning VLMs~\citep[e.g.,][]{du2023vision} or complex ad-hoc procedures to extract rewards from VLMs \citep[e.g.,][]{mahmoudieh2022zero}.
In this work, we demonstrate that simple techniques for using VLMs as \emph{zero-shot} language-grounded reward models work well, as long as the chosen underlying model is sufficiently capable.
Concretely, we make four key contributions.

First, we \textbf{propose VLM-RM}, a general method for using pre-trained VLMs as a reward model for vision-based RL tasks (\Cref{sec:method}). We propose a concrete implementation that uses CLIP as 
a VLM and cos-similarity between the CLIP embedding of the current environment state and a simple language prompt as a reward function. We can optionally regularize the reward model by providing a ``baseline prompt'' that describes a neutral state of the environment and partially projecting the representations onto the direction between baseline and target prompts when computing the reward.

Second, we \textbf{validate our method in the standard \texttt{CartPole} and \texttt{MountainCar} RL benchmarks} (\Cref{sec:exp-standard-rl}). We observe high correlation between VLM-RMs and the ground truth rewards of the environments and successfully train policies to solve the tasks using CLIP as a reward model. Furthermore, we find that the quality of CLIP as a reward model improves if we render the environment using more realistic textures.

Third, we train a \textbf{MuJoCo humanoid to learn complex tasks}, including raising its arms, sitting in a lotus position, doing the splits, and kneeling (\Cref{fig:humanoid_screenshots}; \Cref{sec:exp-complex}) using a CLIP reward model derived from single sentence text prompts (e.g., ``a humanoid robot kneeling'').

Fourth, we \textbf{study how VLM-RMs' performance scales} with the size of the VLM, and find that VLM scale is strongly correlated to VLM-RM quality (\Cref{sec:exp-scaling}). In particular, we can only learn the humanoid tasks in \Cref{fig:humanoid_screenshots} with the largest publicly available CLIP model.

Our results indicate that VLMs are powerful zero-shot reward models.
While current models, such as CLIP, have important limitations that persist when used as VLM-RMs, we expect such limitations to mostly be overcome as larger and more capable VLMs become available. Overall, VLM-RMs are likely to enable us to train models to perform increasingly sophisticated tasks from human-written task descriptions.

\section{Background}
\label{sec:background}

\paragraph{Partially observable Markov decision processes.}
We formulate the problem of training RL agents in vision-based tasks as a partially observable Markov decision process (POMDP).
A POMDP is a tuple $(\mathcal{S}, \mathcal{A}, \theta, R, \mathcal{O}, \phi, \gamma, d_0)$ where: $\mathcal S$ is the state space; $\mathcal A$ is the action space; $\theta(s'|s, a) : \mathcal{S} \times \mathcal{S} \times \mathcal{A}\rightarrow \mathbb [0,1]$ is the transition function; $R(s, a, s'): \mathcal{S} \times \mathcal{A} \times \mathcal{S} \rightarrow \mathbb R$ is the reward function; $\mathcal{O}$ is the observation space; $\phi(o | s): \mathcal{S} \rightarrow \Delta(\mathcal{O})$ is the observation distribution; and $d_0(s) : \mathcal S \rightarrow [0,1]$ is the initial state distribution.

At each point in time, the environment is in a state $s\in \mathcal S$. In each timestep, the agent takes an action $a\in \mathcal A$, causing the environment to transition to state $s'$ with probability $\theta(s'|s,a)$. The agent then receives an observation $o$, with probability $\phi (o|s')$ and a reward $r=R(s,a,s')$. A sequence of states and actions is called a trajectory $\tau = (s_0, a_0, s_1, a_1, \dots)$, where $s_i\in \mathcal S$, and $a_i\in \mathcal A$. The returns of such a trajectory $\tau$ are the discounted sum of rewards $g(\tau; R)=\sum_{t=0}\gamma^t R(s_t, a_t, s_{t+1})$. 

The agent's goal is to find a (possibly stochastic) policy $\pi(s|a)$ that maximizes the expected returns $G(\pi)=\mathbb E_{\tau(\pi)} \left[g(\tau(\pi); R)\right]$.
We only consider finite-horizon trajectories, i.e., $|\tau|<\infty$. 

\looseness -1
\paragraph{Vision-language models.}
We broadly define vision-language models \citep[VLMs;][]{zhang2023vision} as models capable of processing sequences of both language inputs $l \in \mathcal{L}^{\leq n}$ and vision inputs $i \in \mathcal{I}^{\leq m}$. Here, $\mathcal{L}$ is a finite alphabet and $\mathcal{L}^{\leq n}$ contains strings of length less than or equal to $n$, whereas $\mathcal{I}$ is the space of 2D RGB images and $\mathcal{I}^{\leq m}$ contains sequences of images with length less than or equal to $m$.

\paragraph{CLIP models.}
One popular class of VLMs are Contrastive Language-Image Pretraining \citep[CLIP;][]{radford2021learning} encoders. CLIP models consist of a language encoder $\text{CLIP}_L : \mathcal{L}^{\leq n} \rightarrow \mathcal{V}$ and an image encoder $\text{CLIP}_I : \mathcal{I} \rightarrow \mathcal{V}$ mapping into the same latent space $\mathcal{V} = \mathbb R^k$. These encoders are jointly trained via contrastive learning over pairs of images and captions. Commonly CLIP encoders are trained to minimize the cosine distance between embeddings for semantically matching pairs and maximize the cosine distance between semantically non-matching pairs. %

\section{Vision-Language Models as Reward Models (VLM-RMs)}
\label{sec:method}

This section presents how we can use VLMs as a learning-free (zero-shot) way to specify rewards from natural language descriptions of tasks. Importantly, VLM-RMs avoid manually engineering a reward function or collecting expensive data for learning a reward model.

\subsection{Using Vision-Language Models as Rewards}
\label{sec:vlm-rew}

Let us consider a POMDP without a reward function $(\mathcal{S}, \mathcal{A}, \theta, \mathcal{O}, \phi, \gamma, d_0)$.
We focus on vision-based RL where the observations $o \in \mathcal{O}$ are images. For simplicity, we assume a deterministic observation distribution $\phi(o|s)$ defined by a mapping $\psi(s): \mathcal{S} \rightarrow \mathcal{O}$ from states to image observation.
We want the agent to perform a \textit{task} $\mathcal{T}$ based on a natural language description $l \in \mathcal{L}^{\leq n}$. 
For example,  when controlling a humanoid robot (\Cref{sec:exp-complex}) $\mathcal{T}$ might be the robot kneeling on the ground and $\l$ might be the string ``a humanoid robot kneeling''.

To train the agent using RL, we need to first design a reward function. We propose to use a VLM to provide the reward $R(s)$ as:
\begin{equation}
    R_{\text{VLM}}(s) = \text{VLM}(l,\psi(s),c) \text{ ,}
\end{equation}
where $c \in \mathcal{L}^{\leq n}$ is an optional context, e.g., for defining the reward interactively with a VLM.
This formulation is general enough to encompass the use of several different kinds of VLMs, including image and video encoders, as reward models.

\paragraph{CLIP as a reward model.}
In our experiments, we chose a CLIP encoder as the VLM. A very basic way to use CLIP to define a reward function is to use cosine similarity between a state's image representation and the natural language task description:
\begin{equation}
\label{eq:clip-rew}
    R_{\text{CLIP}}(s) = \frac{\text{CLIP}_L(l) \cdot \text{CLIP}_I(\psi(s))}{\|\text{CLIP}_L(l)\| \cdot \|\text{CLIP}_I(\psi(s))\|} \text{.}
\end{equation}
In this case, we do not require a context $c$. We will sometimes call the CLIP image encoder a \textit{state encoder}, as it encodes an image that is a direct function of the POMDP state, and the CLIP language encoder a \textit{task encoder}, as it encodes the language description of the task.

\subsection{Goal-Baseline Regularization to Improve CLIP Reward Models}

While in the previous section, we introduced a very basic way of using CLIP to define a task-based reward function, this section proposes \emph{Goal-Baseline Regularization} as a way to improve the quality of the reward by projecting out irrelevant information about the observation.

So far, we assumed we only have a task description $l \in \mathcal{L}^{\leq n}$. To apply goal-baseline regularization, we require a second ``baseline'' description $b \in \mathcal{L}^{\leq n}$. The baseline $b$ is a natural language description of the environment setting in its default state, irrespective of the goal. For example, our baseline description for the humanoid is simply ``a humanoid robot,'' whereas the task description is, e.g., ``a humanoid robot kneeling.''
We obtain the goal-baseline regularized CLIP reward model ($R_{\text{CLIP-Reg}}$) by projecting our state embedding onto the line spanned by the baseline and task embeddings.

\begin{definition}[Goal-Baseline Regularization]
    \label{def:goal-baseline-reg}
    Given a goal task description $l$ and baseline description $b$, let $\mathbf{g} = \frac{\text{CLIP}_L(l)}{\|\text{CLIP}_L(l)\|}$, $\mathbf{b} = \frac{\text{CLIP}_L(b)}{\|\text{CLIP}_L(b)\|}$, $\mathbf{s}=\frac{\text{CLIP}_I(\psi(s))}{\|\text{CLIP}_I(\psi(s))\|}$ be the normalized encodings, and $L$ be the line spanned by $\mathbf{b}$ and $\mathbf{g}$. The goal-baseline regularized reward function is given by
    \begin{equation}
    R_{\text{CLIP-Reg}}(s)=1-\frac{1}{2}\|\alpha\proj_{L}\mathbf{s}+(1-\alpha)\mathbf{s}-\mathbf{g}\|_2^2,
    \end{equation}
    where $\alpha$ is a parameter to control the regularization strength.
\end{definition}
In particular, for $\alpha = 0$, we recover our initial CLIP reward function $R_{\text{CLIP}}$.
On the other hand, for $\alpha=1$, the projection removes all components of $\mathbf{s}$ orthogonal to $\mathbf{g}-\mathbf{b}$.

\looseness -1
Intuitively, the direction from $\mathbf{b}$ to $\mathbf{g}$ captures the change from the environment's baseline to the target state. By projecting the reward onto this direction, we directionally remove irrelevant parts of the CLIP representation. However, we can not be sure that the direction really captures all relevant information. Therefore, instead of using $\alpha = 1$, we treat it as a hyperparameter. However, we find the method to be relatively robust to changes in  $\alpha$ with most intermediate values being better than $0$ or $1$.

\subsection{RL with CLIP Reward Model}

We can now use VLM-RMs as a drop-in replacement for the reward signal in RL. In our implementation, we use the Deep Q-Network \citep[DQN;][]{mnih2015human} or Soft Actor-Critic \citep[SAC;][]{haarnoja2018soft} RL algorithms. Whenever we interact with the environment, we store the observations in a replay buffer. In regular intervals, we pass a batch of observations from the replay buffer through a CLIP encoder to obtain the corresponding state embeddings. We can then compute the reward function as cosine similarity between the state embeddings and the task embedding which we only need to compute once. Once we have computed the reward for a batch of interactions, we can use them to perform the standard RL algorithm updates. \Cref{app:implementation-details} contains more implementation details and pseudocode for our full algorithm in the case of SAC.

\section{Experiments}\label{sec:experiments}

We conduct a variety of experiments to evaluate CLIP as a reward model with and without goal-baseline regularization. We start with simple control tasks that are popular RL benchmarks: \texttt{CartPole} and \texttt{MountainCar} (\Cref{sec:exp-standard-rl}). These environments have a ground truth reward function and a simple, well-structured state space. We find that our reward models are highly correlated with the ground truth reward function, with this correlation being greatest when applying goal-baseline regularization. Furthermore, we find that the reward model's outputs can be significantly improved by making a simple modification to make the environment's observation function more realistic, e.g., by rendering the mountain car over a mountain texture.

We then move on to our main experiment: controlling a simulated humanoid robot (\Cref{sec:exp-complex}). We use CLIP reward models to specify tasks from short language prompts; several of these tasks are challenging to specify manually. We find that these zero-shot CLIP reward models are sufficient for RL algorithms to learn most tasks we attempted with little to no prompt engineering or hyperparameter tuning.

Finally, we study the scaling properties of the reward models by using CLIP models of different sizes as reward models in the humanoid environment (\Cref{sec:exp-scaling}). We find that larger CLIP models are significantly better reward models. In particular, we can only successfully learn the tasks presented in \Cref{fig:humanoid_screenshots} when using the largest publicly available CLIP model.

\paragraph{Experiment setup.}
We extend the implementation of the DQN and SAC algorithm from the \texttt{stable-baselines3} library \citep{stable-baselines3} to compute rewards from CLIP reward models instead of from the environment. As shown in \Cref{algo:sac-clip} for SAC, we alternate between environment steps, computing the CLIP reward, and RL algorithm updates. We run the RL algorithm updates on a single NVIDIA RTX A6000 GPU. The environment simulation runs on CPU, but we perform rendering and CLIP inference distributed over 4 NVIDIA RTX A6000 GPUs.

We provide the code to reproduce our experiments in the supplementary material.  %
We discuss hyperparameter choices in \Cref{app:implementation-details}, but we mostly use standard parameters from \texttt{stable-baselines3}. \Cref{app:implementation-details} also contains a table with a full list of prompts for our experiments, including both goal and baseline prompts when using goal-baseline regularization.

\subsection{How can we Evaluate VLM-RMs?}

Evaluating reward models can be difficult, particularly for tasks for which we do not have a ground truth reward function. In our experiments, we use 3 types of evaluation: (i) evaluating policies using ground truth reward; (ii) comparing reward functions using EPIC distance; (iii) human evaluation.

\paragraph{Evaluating policies using ground truth reward.}
If we have a ground truth reward function for a task such as for the 
\texttt{CarPole} and \texttt{MountainCar}, we can use it to evaluate policies. For example, we can train a policy using a VLM-RM and evaluate it using the ground truth reward. This is the most popular way to evaluate reward models in the literature and we use it for environments where we have a ground-truth reward available.

\paragraph{Comparing reward functions using EPIC distance.}
The ``Equivalent Policy-Invariant Comparison'' \citep[EPIC;][]{gleave2021quantifying} distance compares two reward functions without requiring the expensive policy training step. EPIC distance is provably invariant on the equivalence class of reward functions that induce the same optimal policy. We consider only goal-based tasks, for which the EPIC is distance particularly easy to compute. In particular, a low EPIC distance between the CLIP reward model and the ground truth reward implies that the CLIP reward model successfully separates goal states from non-goal states. \Cref{app:epic} discusses in more detail how we compute the EPIC distance in our case, and how we can intuitively interpret it for goal-based tasks.

\paragraph{Human evaluation.}
For tasks without a ground truth reward function, such as all humanoid tasks in \Cref{fig:humanoid_screenshots}, we need to perform human evaluations to decide whether our agent is successful. We define ``success rate'' as the percentage of trajectories in which the agent successfully performs the task in at least $50\%$ of the timesteps. For each trajectory, we have a single rater\footnote{One of the authors.} label how many timesteps were spent successfully performing the goal task, and use this to compute the success rate.
However, human evaluations can also be expensive, particularly if we want to evaluate many different policies, e.g., to perform ablations. For such cases, we additionally collect a dataset of human-labelled states for each task, including goal states and non-goal states. We can then compute the EPIC distance with these binary human labels. Empirically, we find this to be a useful proxy for the reward model quality which correlates well with the performance of a policy trained using the reward model.

For more details on our human evaluation protocol, we refer to \Cref{app:evaluation}. Our human evaluation protocol is very basic and might be biased. Therefore, we additionally provide videos of our trained agents at \websitelink.

\subsection{Can VLM-RMs Solve Classic Control Benchmarks?} \label{sec:exp-standard-rl}

\begin{figure}
\centering
\begin{subfigure}{0.31\linewidth}\centering
\includegraphics[width=0.9\linewidth, height=2.5cm]{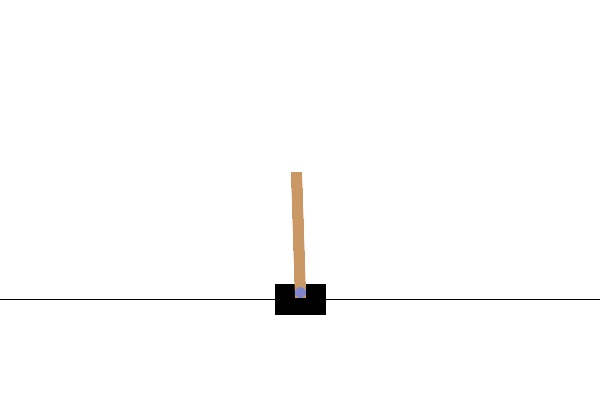} \\
\includegraphics[width=\linewidth]{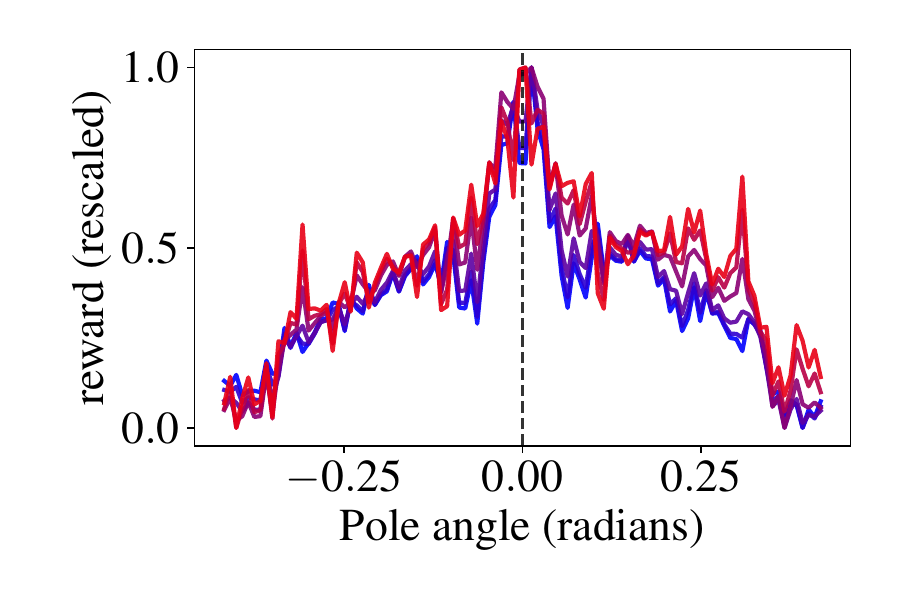}
\caption{\texttt{CartPole}}
\label{fig:reward_cartpole}
\end{subfigure}
\begin{subfigure}{0.31\linewidth}\centering
\includegraphics[width=0.9\linewidth, height=2.5cm]{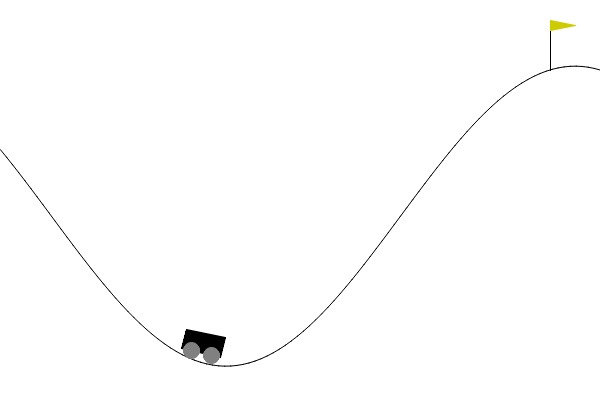} \\
\includegraphics[width=\linewidth]{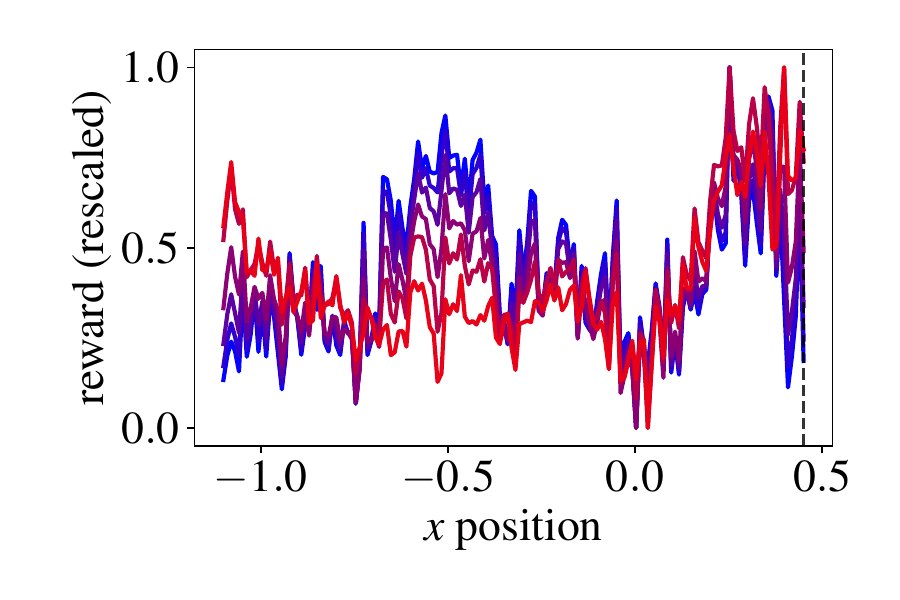}
\caption{\texttt{MountainCar} (original)}
\label{fig:reward_untextured_mountain_car}
\end{subfigure}
\begin{subfigure}{0.31\linewidth}\centering
\includegraphics[width=0.9\linewidth, height=2.5cm]{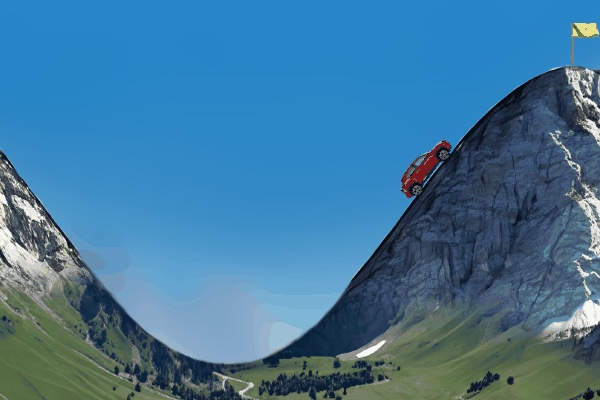} \\
\includegraphics[width=\linewidth]{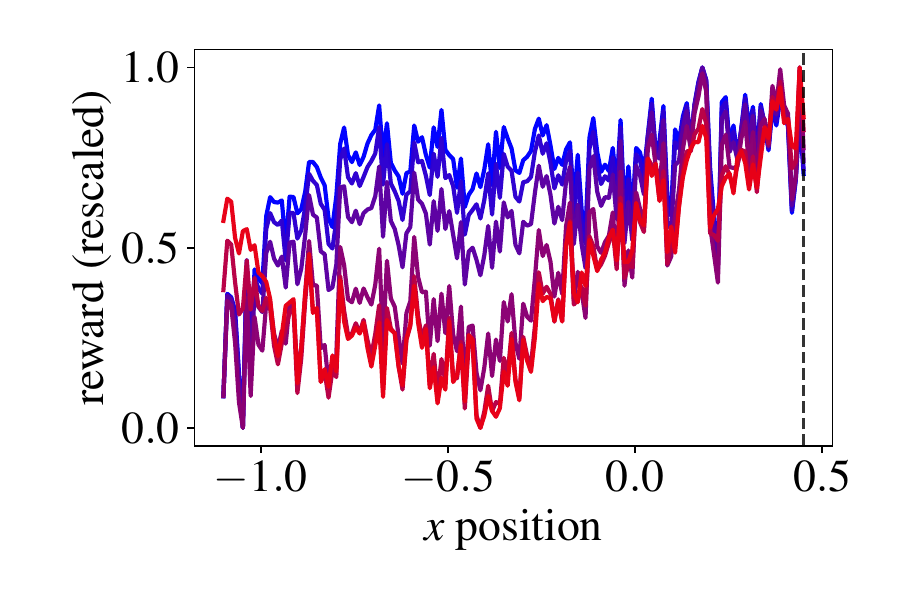}
\caption{\texttt{MountainCar} (textured)}
\label{fig:reward_textured_mountain_car}
\end{subfigure}
\begin{minipage}{0.04\linewidth}
\vspace{-4.4cm}\hspace*{-0.2cm}
\includegraphics[height=2.2cm]{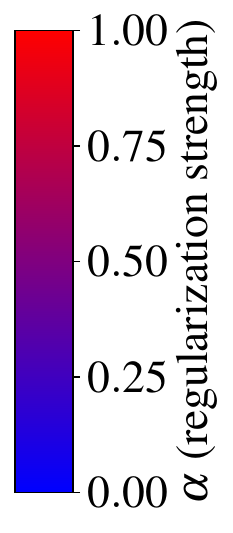}
\end{minipage}
\caption{We study the CLIP reward landscape in two classic control environments: \texttt{CartPole} and \texttt{MountainCar}. We plot the CLIP reward as a function of the pole angle for the \texttt{CartPole} (\subref{fig:reward_cartpole}) and as a function of the x position for the \texttt{MountainCar} (\subref{fig:reward_untextured_mountain_car},\subref{fig:reward_textured_mountain_car}). We mark the respective goal states with a vertical line. The line color encodes different regularization strengths $\alpha$.
For the \texttt{CartPole}, the maximum reward is always when balancing the pole and the regularization has little effect.
For the \texttt{MountainCar}, the agent obtains the maximum reward on top of the mountain. But, the reward landscape is much more well-behaved when the environment has textures and we add goal-baseline regularization -- this is consistent with our results when training policies.}
\label{fig:toy_env_screenshots}
\label{fig:toy_env_landscape}
\end{figure}

As an initial validation of our methods, we consider two classic control environments: \texttt{CartPole} and \texttt{MountainCar}, implemented in OpenAI Gym \citep{brockman2016openai}. In addition to the default \texttt{MountainCar} environment, we also consider a version with a modified rendering method that adds textures to the mountain and the car so that it resembles the setting of ``a car at the peak of a mountain'' more closely (see \Cref{fig:toy_env_screenshots}). This environment allows us to test whether VLM-RMs work better in visually ``more realistic'' environments.

To understand the rewards our CLIP reward models provide, we first analyse plots of their reward landscape. In order to obtain a simple and interpretable visualization figure, we plot CLIP rewards against a one-dimensional state space parameter, that is directly related to the completion of the task. For the \texttt{CartPole} (\Cref{fig:reward_cartpole}) we plot CLIP rewards against the angle of the pole, where the ideal position is at angle $0$. For the (untextured and textured) \texttt{MountainCar} environments \Cref{fig:reward_untextured_mountain_car,fig:reward_textured_mountain_car}, we plot CLIP rewards against the position of the car along the horizontal axis, with the goal location being around $x = 0.5$.

\Cref{fig:reward_cartpole} shows that CLIP rewards are well-shaped around the goal state for the \texttt{CartPole} environment, whereas \Cref{fig:reward_untextured_mountain_car} shows that CLIP rewards for the default \texttt{MountainCar} environment are poorly shaped, and might be difficult to learn from, despite still having roughly the right maximum.

We conjecture that zero-shot VLM-based rewards work better in environments that are more ``photorealistic'' because they are closer to the training distribution of the underlying VLM. \Cref{fig:reward_textured_mountain_car} shows that if, as described earlier, we apply custom textures to the \texttt{MountainCar} environment, the CLIP rewards become well-shaped when used in concert with the goal-baseline regularization technique. For larger regularization strength $\alpha$, the reward shape resembles the slope of the hill from the environment itself -- an encouraging result.

We then train agents using the CLIP rewards and goal-baseline regularization in all three environments, and achieve 100\% task success rate in both environments (\texttt{CartPole} and textured \texttt{MountainCar}) for most $\alpha$ regularization strengths. Without the custom textures, we are not able to successfully train an agent on the mountain car task, which supports our hypothesis that the environment visualization is too abstract.

The results show that both and regularized CLIP rewards are effective in the toy RL task domain, with the important caveat that CLIP rewards are only meaningful and well-shaped for environments that are photorealistic enough for the CLIP visual encoder to interpret correctly.

\subsection{Can VLM-RMs Learn Complex, Novel Tasks in a Humanoid Robot?} \label{sec:exp-complex}

\begin{table}[]
\centering
\begin{minipage}{0.35\textwidth}
\begin{tabular}{lc}
\toprule
Task & \specialcell{Success \\ Rate} \\
\midrule
Kneeling & \posres{100} \\
Lotus position & \posres{100} \\
Standing up & \posres{100} \\
Arms raised & \posres{100} \\
Doing splits & \posres{100} \\
Hands on hips & \mixedres{64} \\
Standing on one leg & \negres{0} \\
Arms crossed & \negres{0}
\end{tabular}
\vspace{1em}
\end{minipage}
\hfill
\begin{minipage}{0.64\textwidth}
\caption{We successfully learned 5 out of 8 tasks we tried for the humanoid robot (cf. \Cref{fig:humanoid_screenshots}). For each task, we evaluate the checkpoint with the highest CLIP reward over $4$ random seeds. We show a human evaluator 100 trajectories from the agent and ask them to label how many timesteps were spent successfully performing the goal task. Then, we label an episode as a success if the agent is in the goal state at least $50\%$ of the timesteps. The success rate is the fraction of trajectories labelled as successful. We provide more details on the evaluation as well as more fine-grained human labels in \Cref{app:evaluation} and videos of the agents' performance at \websitelink.}
\label{tab:humanoid-acc}
\end{minipage}
\end{table}

Our primary goal in using VLM-RMs is to learn tasks for which it is difficult to specify a reward function manually. To study such tasks, we consider the \texttt{Humanoid-v4} environment implemented in the MuJoCo simulator \citep{todorov2012mujoco}.

The standard task in this environment is for the humanoid robot to stand up. For this task, the environment provides a reward function based on the vertical position of the robot's center of mass. We consider a range of additional tasks for which no ground truth reward function is available, including kneeling, sitting in a lotus position, and doing the splits. For a full list of tasks we tested, see \Cref{tab:humanoid-acc}. \Cref{app:implementation-details} presents more detailed task descriptions and the full prompts we used.

We make two modifications to the default \texttt{Humanoid-v4} environment to make it better suited for our experiments. (1) We change the colors of the humanoid texture and the environment background to be more realistic (based on our results in \Cref{sec:exp-standard-rl} that suggest this should improve the CLIP encoder). (2) We move the camera to a fixed position pointing at the agent slightly angled down because the original camera position that moves with the agent can make some of our tasks impossible to evaluate. We ablate these changes in \Cref{fig:ablations}, finding the texture change is critical and repositioning the camera provides a modest improvement.

\Cref{tab:humanoid-acc} shows the human-evaluated success rate for all tasks we tested. We solve 5 out of 8 tasks we tried with minimal prompt engineering and tuning. For the remaining 3 tasks, we did not get major performance improvements with additional prompt engineering and hyperparameter tuning, and we hypothesize these failures are related to capability limitations in the CLIP model we use.
We invite the reader to evaluate the performance of the trained agents themselves by viewing videos at \websitelink.

The three tasks that the agent does not obtain perfect performance for are ``hands on hips'', ``standing on one leg'', and ``arms crossed''. We hypothesize that ``standing on one leg'' is very hard to learn or might even be impossible in the MuJoCo physics simulation because the humanoid's feet are round. The goal state for ``hands on hips'' and ``arms crossed'' is visually similar to a humanoid standing and we conjecture the current generation of CLIP models are unable to discriminate between such subtle differences in body pose.

While the experiments in \Cref{tab:humanoid-acc} use no goal-baseline regularization (i.e., $\alpha=0$), we separately evaluate goal-baseline regularization for the kneeling task. \Cref{fig:humanoid-alpha-reg} shows that $\alpha \neq 0$ improves the reward model's EPIC distance to human labels, suggesting that it would also improve performance on the final task, we might need a more fine-grained evaluation criterion to see that.

\begin{figure}
\centering
\begin{minipage}{0.33\linewidth}
\resizebox{\linewidth}{!}{
\begin{tabular}{llc}
\toprule
\specialcell{Camera \\ Angle} & Textures & \specialcell{Success \\ Rate} \\
\midrule
Original & Original & \negres{36} \\
Original & Modified & \mixedres{91} \\
Modified & Modified & \posres{100} \\
\end{tabular}
}\vspace{2.5 em}
\end{minipage}
\begin{minipage}{0.65\linewidth}
\begin{subfigure}{0.32\linewidth}
 \centering
 \includegraphics[width=\textwidth]{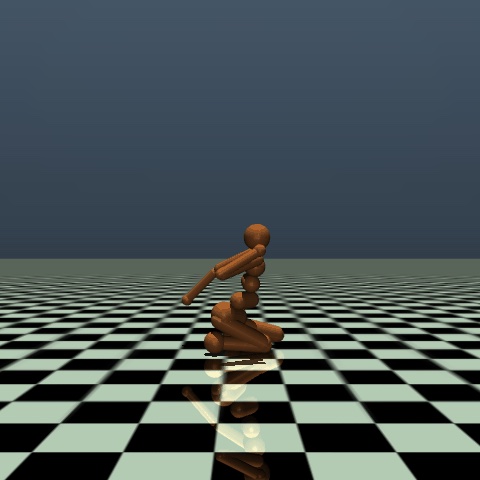}
 \caption{Original\newline}
 \label{fig:ablation_original}
\end{subfigure}
\begin{subfigure}{0.32\linewidth}
 \centering
 \includegraphics[width=\textwidth]{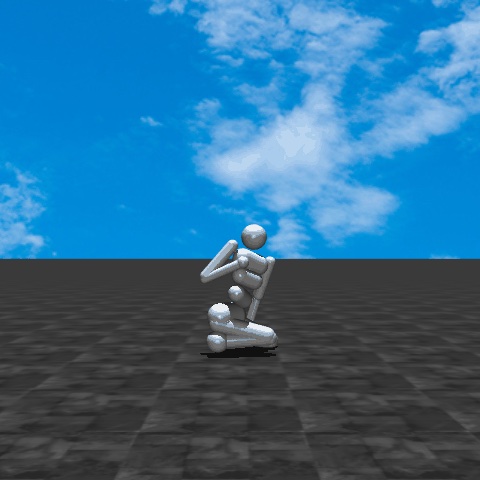}
 \caption{Modified textures\newline}
 \label{fig:ablation_mod_textures}
\end{subfigure}
\begin{subfigure}{0.32\linewidth}
 \centering
 \includegraphics[width=\textwidth]{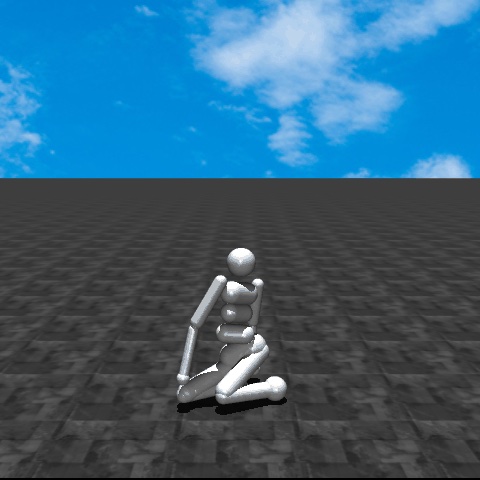}
 \caption{Modified textures \& camera angle}
 \label{fig:ablation_mod_both}
\end{subfigure}
\end{minipage}
\caption{We test the effect of our modifications to the standard Humanoid-v4 environment on the kneeling task. We compare the original environment (\subref{fig:ablation_original}) to modifying the textures (\subref{fig:ablation_mod_textures}) and the camera angle (\subref{fig:ablation_mod_both}). We find that modifying the textures to be more realistic is crucial to making the CLIP reward model work. Moving the camera to give a better view of the humanoid helps too, but is less critical in this task.}
\label{fig:ablations}
\end{figure}

\subsection{How do VLM-RMs Scale with VLM Model Size?} \label{sec:exp-scaling}

Finally, we investigate the effect of the scale of the pre-trained VLM on its quality as a reward model. We focus on the ``kneeling'' task and consider 4 different large CLIP models: the original CLIP \texttt{RN50} \citep{radford2021learning}, and the \texttt{ViT-L-14}, \texttt{ViT-H-14}, and \texttt{ViT-bigG-14} from OpenCLIP \citep{cherti2023reproducible} trained on the LAION-5B dataset \citep{schuhmann2022laionb}.

In \Cref{fig:scale_gbr} we evaluate the EPIC distance to human labels of CLIP reward models for the four model scales and different values of $\alpha$, and we evaluate the success rate of agents trained using the four models.
The results clearly show that VLM model scale is a key factor in obtaining good reward models. We detect a clear positive trend between model scale, and the EPIC distance of the reward model from human labels. On the models we evaluate, we find the EPIC distance to human labels is close to log-linear in the size of the CLIP model (\Cref{fig:scale_compute}).

This improvement in EPIC distance translates into an improvement in success rate. In particular, we observe a sharp phase transition between the \texttt{ViT-H-14} and \texttt{VIT-bigG-14} CLIP models: we can only learn the kneeling task successfully when using the \texttt{VIT-bigG-14} model and obtain $0\%$ success rate for all smaller models (\Cref{fig:scale_success}). Notably, the reward model improves smoothly and predictably with model scale as measured by EPIC distance. However, predicting the exact point where the RL agent can successfully learn the task is difficult. This is a common pattern in evaluating large foundation models, as observed by \citet{ganguli2022predictability}.

\begin{figure}
\centering
\begin{subfigure}{\linewidth}
\includegraphics[width=0.7\linewidth]{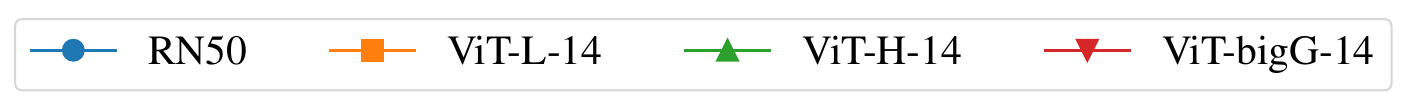}
\end{subfigure}
\begin{subfigure}{0.32\linewidth}
\includegraphics[width=\linewidth]{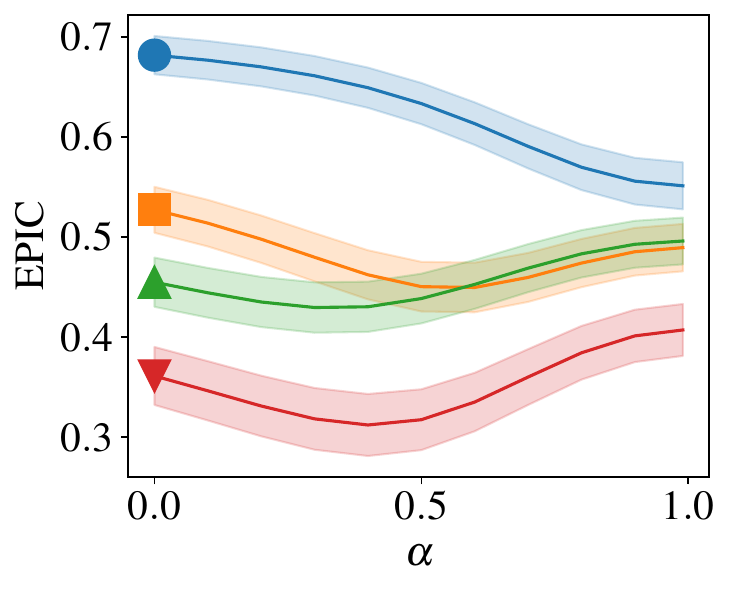}
\caption{Goal-baseline regularization for different model sizes.}
\label{fig:humanoid-alpha-reg}
\label{fig:scale_gbr}
\vspace{0.5em}
\end{subfigure}\hfill
\begin{subfigure}{0.32\linewidth}
\includegraphics[width=\linewidth]{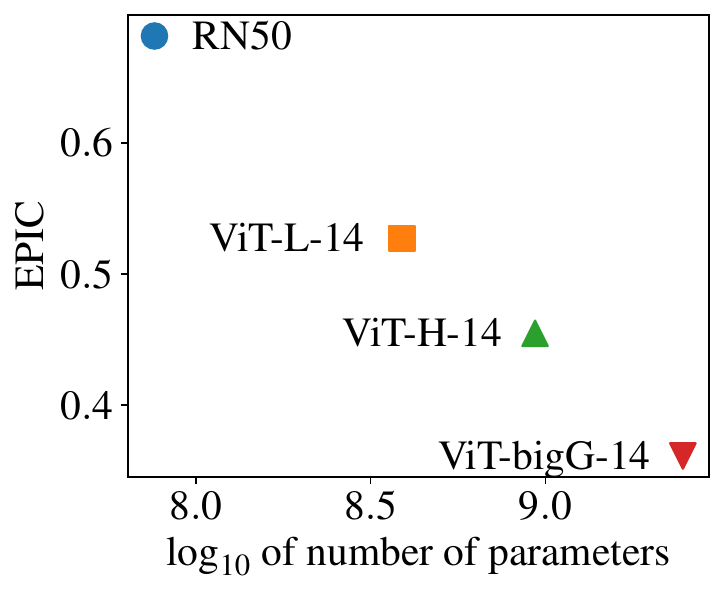}
\caption{Reward model performance by VLM training compute ($\alpha = 0$).}
\label{fig:scale_compute}
\end{subfigure}\hfill
\begin{subfigure}{0.26\linewidth}
\resizebox{\linewidth}{!}{
    \begin{tabular}[b]{lc}
    \toprule
    Model & \specialcell{Success \\ Rate} \\
    \midrule
    RN50 & \negres{0} \\
    ViT-L-14 & \negres{0} \\
    ViT-H-14 & \negres{0} \\
    ViT-bigG-14 & \posres{100} \\
    \end{tabular}
}
\vspace{1.8em}
\caption{Human-evaluated success rate (over $2$ seeds).}
\label{fig:scale_success}
\end{subfigure}
\caption{VLMs become better reward models with VLM model scale. We evaluate the humanoid kneeling task for different VLM model sizes.
We evaluate the EPIC distance between the CLIP rewards and human labels (\subref{fig:scale_gbr} and \subref{fig:scale_compute}) and the human-evaluated success rate of an agent trained using differently sized CLIP reward models (\subref{fig:scale_success}).
We see a strong positive effect of model scale on VLM-RM quality. In particular, (\subref{fig:scale_success}) shows we are only able to learn the kneeling task using the largest CLIP model publically available, whereas (\subref{fig:scale_compute}) shows there is a smooth improvement in EPIC distance compared to human labels.
(\subref{fig:scale_gbr}) shows that goal-baseline regularization improves the reward model across model sizes but it is more impactful for small models.}
\label{fig:scaling-laws}
\end{figure}

\section{Related Work}

Foundation models \citep{bommasani2021opportunities} trained on large scale data can learn remarkably general and transferable representations of images, language, and other kinds of data, which makes them useful for a large variety of downstream tasks. For example, pre-trained vision-language encoders, such as CLIP \citep{radford2021learning}, have been used far beyond their original scope, e.g., for image generation \citep{ramesh2022hierarchical, patashnik2021styleclip, nichol2021glide}, robot control \citep{shridhar2022cliport, khandelwal2022simple}, or story evaluation \citep{matiana2021cut}.

Reinforcement learning from human feedback \citep[RLHF;][]{christiano2017deep} is a critical step in making foundation models more useful \citep{ouyang2022training}. However, collecting human feedback is expensive. Therefore, using pre-trained foundation models themselves to obtain reward signals for RL finetuning has recently emerged as a key paradigm in work on large language models \citep{bai2022constitutional}. Some approaches only require a small amount of natural language feedback instead of a whole dataset of human preferences \citep{scheurer2022training,scheurer2023training,chen2023improving}. However, similar techniques have yet to be adopted by the broader RL community.

While some work uses language models to compute a reward function from a structured environment representation \citep{xie2023text2reward,ma2023eureka}, many RL tasks are visual and require using VLMs instead. \cite{sumers2023distilling} use generative VLMs to relabel the goal of agent trajectories for hindsight experience replay, but not for specifying rewards. \citet{cui2022can} use CLIP to provide rewards for robotic manipulation tasks given a goal image. However, they only show limited success when using natural language descriptions to define goals, which is the focus of our work. \citet{mahmoudieh2022zero} are the first to successfully use CLIP encoders as a reward model conditioned on language task descriptions in robotic manipulation tasks. However, to achieve this, the authors need to explicitly fine-tune the CLIP image encoder on a carefully crafted dataset for a robotics task. Instead, we focus on leveraging CLIP's zero-shot ability to specify reward functions, which is significantly more sample-efficient and practical. \citet{fan2022minedojo} train a CLIP model to provide a reward signal in Minecraft environments. But, that approach requires a lot of labeled, environment-specific data. \citet{du2023vision} finetune a Flamingo VLM \citep{alayrac2022flamingo} to act as a ``success detector'' for vision-based RL tasks tasks. However, they do not train RL policies using these success detectors, leaving open the question of how robust they are under optimization pressure. Concurrently to our work, \citet{sontakke2023roboclip} successfully use a VLM to provide reward signals for RL agents in robotics settings. However, they focus on specifying the reward with video demonstrations and only show basic results with natural language task descriptions.

In contrast to these works, we do not require any finetuning to use CLIP as a reward model, and we successfully train RL policies to achieve a range of complex tasks that do not have an easily-specified ground truth reward function.

\section{Conclusion}

We introduced a method to use vision-language models (VLMs) as reward models for reinforcement learning (RL), and implemented it using CLIP as a reward model and standard RL algorithms. We used VLM-RMs to solve classic RL benchmarks and to learn to perform complicated tasks using a simulated humanoid robot. We observed a strong scaling trend with model size, which suggests that future VLMs are likely to be useful as reward models in an even broader range of tasks.

\paragraph{Limitations.}
Fundamentally, our approach relies on the reward model generalizing from a text description to a reward function that captures what a human intends the agent to do. Although the concrete failure cases we observed are likely specific to the CLIP models we used and may be solved by more capable models, some problems will persist. The resulting reward model will be misspecified if the text description does not contain enough information about what the human intends or the VLM generalizes poorly. While we expect future VLMs to generalize better, the risk of the reward model being misspecified grows for more complex tasks, that are difficult to specify in a single language prompt.
Therefore, when using VLM-RMs in practice it will be crucial to use independent monitoring to ensure agents trained from automated feedback act as intended. For complex tasks, it will be prudent to use a multi-step reward specification, e.g., by using a VLM capable of having a dialogue with the user about specifying the task.

\paragraph{Future Work.}
There are many possible extensions of our approach that may improve performance but were not necessary in our tasks. For example, finetuning VLMs for specific environments is a natural next step to make them more useful as reward models.
To move beyond goal-based supervision, future VLM-RMs could encode videos instead of images. To move towards specifying more complex tasks, future VLM-RMs could use dialogue-enabled VLMs. 

For practical applications, it will be important to ensure robustness and safety of the reward model. Our work can serve as a basis for studying the safety implications of VLM-RMs. For instance, future work could investigate the robustness of VLM-RMs against optimization pressure by RL agents.

More broadly, we believe VLM-RMs open up exciting avenues for future research to build useful agents on top of pre-trained models, such as building language model agents and real world robotic controllers for tasks where we do not have a reward function available.

\subsubsection*{Author Contributions}

\textbf{Juan Rocamonde} designed and implemented the experimental infrastructure, ran most experiments, analyzed results, and wrote large parts of the paper.

\textbf{Victoriano Montesinos} implemented parallelized rendering and training to enable using larger CLIP models, implemented and ran many experiments, and performed the human evaluations.

\textbf{Elvis Nava} advised on experiment design, implemented and ran some of the experiments, and wrote large parts of the paper.

\textbf{Ethan Perez} proposed the original project and advised on research direction and experiment design.

\textbf{David Lindner} implemented and ran early experiments with the humanoid robot, wrote large parts of the paper, and led the project.

\subsubsection*{Acknowledgments}
    
We thank Adam Gleave for valuable discussions throughout the project and detailed feedback on early drafts, Jérémy Scheurer and Nora Belrose for helpful feedback early on, Adrià Garriga-Alonso for help with running experiments, and Xander Balwit for help with editing the paper.

We are grateful for funding received by Open Philanthropy, Manifund, the ETH AI Center, Swiss National Science Foundation (B.F.G.~CRSII5-173721 and 315230 189251), ETH project funding (B.F.G.~ETH-20 19-01), and the Human Frontiers Science Program (RGY0072/2019).

\bibliography{references}

\begin{thebibliography}{33}
\providecommand{\natexlab}[1]{#1}
\providecommand{\url}[1]{\texttt{#1}}
\expandafter\ifx\csname urlstyle\endcsname\relax
  \providecommand{\doi}[1]{doi: #1}\else
  \providecommand{\doi}{doi: \begingroup \urlstyle{rm}\Url}\fi

\bibitem[Alayrac et~al.(2022)Alayrac, Donahue, Luc, Miech, Barr, Hasson, Lenc,
  Mensch, Millican, Reynolds, et~al.]{alayrac2022flamingo}
Jean-Baptiste Alayrac, Jeff Donahue, Pauline Luc, Antoine Miech, Iain Barr,
  Yana Hasson, Karel Lenc, Arthur Mensch, Katie Millican, Malcolm Reynolds,
  et~al.
\newblock Flamingo: A visual language model for few-shot learning.
\newblock In \emph{Advances in Neural Information Processing Systems}, 2022.

\bibitem[Bai et~al.(2022)Bai, Kadavath, Kundu, Askell, Kernion, Jones, Chen,
  Goldie, Mirhoseini, McKinnon, et~al.]{bai2022constitutional}
Yuntao Bai, Saurav Kadavath, Sandipan Kundu, Amanda Askell, Jackson Kernion,
  Andy Jones, Anna Chen, Anna Goldie, Azalia Mirhoseini, Cameron McKinnon,
  et~al.
\newblock Constitutional {AI}: Harmlessness from {AI} feedback.
\newblock \emph{arXiv preprint arXiv:2212.08073}, 2022.

\bibitem[Bommasani et~al.(2021)Bommasani, Hudson, Adeli, Altman, Arora, von
  Arx, Bernstein, Bohg, Bosselut, Brunskill,
  et~al.]{bommasani2021opportunities}
Rishi Bommasani, Drew~A Hudson, Ehsan Adeli, Russ Altman, Simran Arora, Sydney
  von Arx, Michael~S Bernstein, Jeannette Bohg, Antoine Bosselut, Emma
  Brunskill, et~al.
\newblock On the opportunities and risks of foundation models.
\newblock \emph{arXiv preprint arXiv:2108.07258}, 2021.

\bibitem[Brockman et~al.(2016)Brockman, Cheung, Pettersson, Schneider,
  Schulman, Tang, and Zaremba]{brockman2016openai}
Greg Brockman, Vicki Cheung, Ludwig Pettersson, Jonas Schneider, John Schulman,
  Jie Tang, and Wojciech Zaremba.
\newblock {OpenAI Gy}.
\newblock \emph{arXiv preprint arXiv:1606.01540}, 2016.

\bibitem[Chen et~al.(2023)Chen, Scheurer, Korbak, Campos, Chan, Bowman, Cho,
  and Perez]{chen2023improving}
Angelica Chen, Jérémy Scheurer, Tomasz Korbak, Jon~Ander Campos, Jun~Shern
  Chan, Samuel~R. Bowman, Kyunghyun Cho, and Ethan Perez.
\newblock Improving code generation by training with natural language feedback,
  2023.

\bibitem[Cherti et~al.(2023)Cherti, Beaumont, Wightman, Wortsman, Ilharco,
  Gordon, Schuhmann, Schmidt, and Jitsev]{cherti2023reproducible}
Mehdi Cherti, Romain Beaumont, Ross Wightman, Mitchell Wortsman, Gabriel
  Ilharco, Cade Gordon, Christoph Schuhmann, Ludwig Schmidt, and Jenia Jitsev.
\newblock Reproducible scaling laws for contrastive language-image learning.
\newblock In \emph{Proceedings of the IEEE/CVF Conference on Computer Vision
  and Pattern Recognition}, pp.\  2818--2829, 2023.

\bibitem[Christiano et~al.(2017)Christiano, Leike, Brown, Martic, Legg, and
  Amodei]{christiano2017deep}
Paul~F Christiano, Jan Leike, Tom Brown, Miljan Martic, Shane Legg, and Dario
  Amodei.
\newblock Deep reinforcement learning from human preferences.
\newblock \emph{Advances in Neural Information Processing Systems}, 2017.

\bibitem[Cui et~al.(2022)Cui, Niekum, Gupta, Kumar, and Rajeswaran]{cui2022can}
Yuchen Cui, Scott Niekum, Abhinav Gupta, Vikash Kumar, and Aravind Rajeswaran.
\newblock Can foundation models perform zero-shot task specification for robot
  manipulation?
\newblock In \emph{Learning for Dynamics and Control Conference}, 2022.

\bibitem[Du et~al.(2023)Du, Konyushkova, Denil, Raju, Landon, Hill, de~Freitas,
  and Cabi]{du2023vision}
Yuqing Du, Ksenia Konyushkova, Misha Denil, Akhil Raju, Jessica Landon, Felix
  Hill, Nando de~Freitas, and Serkan Cabi.
\newblock Vision-language models as success detectors.
\newblock \emph{arXiv preprint arXiv:2303.07280}, 2023.

\bibitem[Fan et~al.(2022)Fan, Wang, Jiang, Mandlekar, Yang, Zhu, Tang, Huang,
  Zhu, and Anandkumar]{fan2022minedojo}
Linxi Fan, Guanzhi Wang, Yunfan Jiang, Ajay Mandlekar, Yuncong Yang, Haoyi Zhu,
  Andrew Tang, De-An Huang, Yuke Zhu, and Anima Anandkumar.
\newblock Minedojo: Building open-ended embodied agents with internet-scale
  knowledge.
\newblock In \emph{Advances in Neural Information Processing Systems}, 2022.

\bibitem[Ganguli et~al.(2022)Ganguli, Hernandez, Lovitt, Askell, Bai, Chen,
  Conerly, Dassarma, Drain, Elhage, et~al.]{ganguli2022predictability}
Deep Ganguli, Danny Hernandez, Liane Lovitt, Amanda Askell, Yuntao Bai, Anna
  Chen, Tom Conerly, Nova Dassarma, Dawn Drain, Nelson Elhage, et~al.
\newblock Predictability and surprise in large generative models.
\newblock In \emph{Proceedings of the 2022 ACM Conference on Fairness,
  Accountability, and Transparency}, pp.\  1747--1764, 2022.

\bibitem[Gleave et~al.(2021)Gleave, Dennis, Legg, Russell, and
  Leike]{gleave2021quantifying}
Adam Gleave, Michael~D Dennis, Shane Legg, Stuart Russell, and Jan Leike.
\newblock Quantifying differences in reward functions.
\newblock In \emph{International Conference on Learning Representations}, 2021.

\bibitem[Haarnoja et~al.(2018)Haarnoja, Zhou, Abbeel, and
  Levine]{haarnoja2018soft}
Tuomas Haarnoja, Aurick Zhou, Pieter Abbeel, and Sergey Levine.
\newblock Soft actor-critic: Off-policy maximum entropy deep reinforcement
  learning with a stochastic actor.
\newblock In \emph{International Conference on Machine Learning}, 2018.

\bibitem[Khandelwal et~al.(2022)Khandelwal, Weihs, Mottaghi, and
  Kembhavi]{khandelwal2022simple}
Apoorv Khandelwal, Luca Weihs, Roozbeh Mottaghi, and Aniruddha Kembhavi.
\newblock Simple but effective: {CLIP} embeddings for embodied {AI}.
\newblock In \emph{IEEE/CVF Conference on Computer Vision and Pattern
  Recognition}, 2022.

\bibitem[Ma et~al.(2023)Ma, Liang, Wang, Huang, Bastani, Jayaraman, Zhu, Fan,
  and Anandkumar]{ma2023eureka}
Yecheng~Jason Ma, William Liang, Guanzhi Wang, De-An Huang, Osbert Bastani,
  Dinesh Jayaraman, Yuke Zhu, Linxi Fan, and Anima Anandkumar.
\newblock Eureka: Human-level reward design via coding large language models.
\newblock \emph{arXiv preprint arXiv:2310.12931}, 2023.

\bibitem[Mahmoudieh et~al.(2022)Mahmoudieh, Pathak, and
  Darrell]{mahmoudieh2022zero}
Parsa Mahmoudieh, Deepak Pathak, and Trevor Darrell.
\newblock Zero-shot reward specification via grounded natural language.
\newblock In \emph{International Conference on Machine Learning}, 2022.

\bibitem[Matiana et~al.(2021)Matiana, Smith, Teehan, Castricato, Biderman, Gao,
  and Frazier]{matiana2021cut}
Shahbuland Matiana, JR~Smith, Ryan Teehan, Louis Castricato, Stella Biderman,
  Leo Gao, and Spencer Frazier.
\newblock Cut the carp: Fishing for zero-shot story evaluation.
\newblock \emph{arXiv preprint arXiv:2110.03111}, 2021.

\bibitem[Mnih et~al.(2015)Mnih, Kavukcuoglu, Silver, Rusu, Veness, Bellemare,
  Graves, Riedmiller, Fidjeland, Ostrovski, et~al.]{mnih2015human}
Volodymyr Mnih, Koray Kavukcuoglu, David Silver, Andrei~A Rusu, Joel Veness,
  Marc~G Bellemare, Alex Graves, Martin Riedmiller, Andreas~K Fidjeland, Georg
  Ostrovski, et~al.
\newblock Human-level control through deep reinforcement learning.
\newblock \emph{Nature}, 518\penalty0 (7540):\penalty0 529--533, 2015.

\bibitem[Nichol et~al.(2021)Nichol, Dhariwal, Ramesh, Shyam, Mishkin, McGrew,
  Sutskever, and Chen]{nichol2021glide}
Alex Nichol, Prafulla Dhariwal, Aditya Ramesh, Pranav Shyam, Pamela Mishkin,
  Bob McGrew, Ilya Sutskever, and Mark Chen.
\newblock Glide: Towards photorealistic image generation and editing with
  text-guided diffusion models.
\newblock \emph{arXiv preprint arXiv:2112.10741}, 2021.

\bibitem[Ouyang et~al.(2022)Ouyang, Wu, Jiang, Almeida, Wainwright, Mishkin,
  Zhang, Agarwal, Slama, Ray, et~al.]{ouyang2022training}
Long Ouyang, Jeffrey Wu, Xu~Jiang, Diogo Almeida, Carroll Wainwright, Pamela
  Mishkin, Chong Zhang, Sandhini Agarwal, Katarina Slama, Alex Ray, et~al.
\newblock Training language models to follow instructions with human feedback.
\newblock \emph{Advances in Neural Information Processing Systems}, 2022.

\bibitem[Patashnik et~al.(2021)Patashnik, Wu, Shechtman, Cohen-Or, and
  Lischinski]{patashnik2021styleclip}
Or~Patashnik, Zongze Wu, Eli Shechtman, Daniel Cohen-Or, and Dani Lischinski.
\newblock {StyleCLIP}: Text-driven manipulation of {StyleGAN} imagery.
\newblock In \emph{IEEE/CVF International Conference on Computer Vision}, 2021.

\bibitem[Radford et~al.(2021)Radford, Kim, Hallacy, Ramesh, Goh, Agarwal,
  Sastry, Askell, Mishkin, Clark, et~al.]{radford2021learning}
Alec Radford, Jong~Wook Kim, Chris Hallacy, Aditya Ramesh, Gabriel Goh,
  Sandhini Agarwal, Girish Sastry, Amanda Askell, Pamela Mishkin, Jack Clark,
  et~al.
\newblock Learning transferable visual models from natural language
  supervision.
\newblock In \emph{International Conference on Machine Learning}, 2021.

\bibitem[Raffin et~al.(2021)Raffin, Hill, Gleave, Kanervisto, Ernestus, and
  Dormann]{stable-baselines3}
Antonin Raffin, Ashley Hill, Adam Gleave, Anssi Kanervisto, Maximilian
  Ernestus, and Noah Dormann.
\newblock Stable-baselines3: Reliable reinforcement learning implementations.
\newblock \emph{Journal of Machine Learning Research}, 22\penalty0
  (268):\penalty0 1--8, 2021.

\bibitem[Ramesh et~al.(2022)Ramesh, Dhariwal, Nichol, Chu, and
  Chen]{ramesh2022hierarchical}
Aditya Ramesh, Prafulla Dhariwal, Alex Nichol, Casey Chu, and Mark Chen.
\newblock Hierarchical text-conditional image generation with {CLIP} latents.
\newblock \emph{arXiv preprint arXiv:2204.06125}, 2022.

\bibitem[Scheurer et~al.(2023)Scheurer, Campos, Korbak, Chan, Chen, Cho, and
  Perez]{scheurer2023training}
J{\'e}r{\'e}my Scheurer, Jon~Ander Campos, Tomasz Korbak, Jun~Shern Chan,
  Angelica Chen, Kyunghyun Cho, and Ethan Perez.
\newblock Training language models with language feedback at scale.
\newblock \emph{arXiv preprint arXiv:2303.16755}, 2023.

\bibitem[Scheurer et~al.(2022)Scheurer, Campos, Chan, Chen, Cho, and
  Perez]{scheurer2022training}
Jérémy Scheurer, Jon~Ander Campos, Jun~Shern Chan, Angelica Chen, Kyunghyun
  Cho, and Ethan Perez.
\newblock Training language models with language feedback, 2022.

\bibitem[Schuhmann et~al.(2022)Schuhmann, Beaumont, Vencu, Gordon, Wightman,
  Cherti, Coombes, Katta, Mullis, Wortsman, et~al.]{schuhmann2022laionb}
Christoph Schuhmann, Romain Beaumont, Richard Vencu, Cade~W Gordon, Ross
  Wightman, Mehdi Cherti, Theo Coombes, Aarush Katta, Clayton Mullis, Mitchell
  Wortsman, et~al.
\newblock {LAION}-{5B}: An open large-scale dataset for training next
  generation image-text models.
\newblock In \emph{Advances in Neural Information Processing Systems Datasets
  and Benchmarks Track}, 2022.

\bibitem[Shridhar et~al.(2022)Shridhar, Manuelli, and Fox]{shridhar2022cliport}
Mohit Shridhar, Lucas Manuelli, and Dieter Fox.
\newblock {CLIP}ort: What and where pathways for robotic manipulation.
\newblock In \emph{Conference on Robot Learning}, 2022.

\bibitem[Sontakke et~al.(2023)Sontakke, Zhang, Arnold, Pertsch, B{\i}y{\i}k,
  Sadigh, Finn, and Itti]{sontakke2023roboclip}
Sumedh~A Sontakke, Jesse Zhang, S{\'e}bastien~MR Arnold, Karl Pertsch, Erdem
  B{\i}y{\i}k, Dorsa Sadigh, Chelsea Finn, and Laurent Itti.
\newblock Roboclip: One demonstration is enough to learn robot policies.
\newblock In \emph{Advances in Neural Information Processing Systems}, 2023.

\bibitem[Sumers et~al.(2023)Sumers, Marino, Ahuja, Fergus, and
  Dasgupta]{sumers2023distilling}
Theodore Sumers, Kenneth Marino, Arun Ahuja, Rob Fergus, and Ishita Dasgupta.
\newblock Distilling internet-scale vision-language models into embodied
  agents.
\newblock January 2023.

\bibitem[Todorov et~al.(2012)Todorov, Erez, and Tassa]{todorov2012mujoco}
Emanuel Todorov, Tom Erez, and Yuval Tassa.
\newblock {MuJoCo}: A physics engine for model-based control.
\newblock In \emph{IEEE/RSJ International Conference on Intelligent Robots and
  Systems}, 2012.

\bibitem[Xie et~al.(2023)Xie, Zhao, Wu, Liu, Luo, Zhong, Yang, and
  Yu]{xie2023text2reward}
Tianbao Xie, Siheng Zhao, Chen~Henry Wu, Yitao Liu, Qian Luo, Victor Zhong,
  Yanchao Yang, and Tao Yu.
\newblock {Text2Reward}: Automated dense reward function generation for
  reinforcement learning.
\newblock \emph{arXiv preprint arXiv:2309.11489}, 2023.

\bibitem[Zhang et~al.(2023)Zhang, Huang, Jin, and Lu]{zhang2023vision}
Jingyi Zhang, Jiaxing Huang, Sheng Jin, and Shijian Lu.
\newblock Vision-language models for vision tasks: A survey.
\newblock \emph{arXiv preprint arXiv:2304.00685}, 2023.

\end{thebibliography}
\bibliographystyle{iclr2024_conference}

\clearpage
\appendix

\section{Computing and Interpreting EPIC Distance} \label{app:epic}

Our experiments all have goal-based ground truth reward functions, i.e., they give high reward if a goal state is reached and low reward if not. This section discusses how this helps to estimate EPIC distance between reward functions more easily. As a side-effect, this gives us an intuitive understanding of EPIC distance in our context.
First, let us define EPIC distance.

\begin{definition}[EPIC distance; \citet{gleave2021quantifying}]
The Equivalent-Policy Invariant Comparison (EPIC) distance between reward functions $R_1$ and $R_2$ is:
\begin{equation}
D_{EPIC}=\frac{1}{\sqrt{2}}\sqrt{1-\rho (\mathcal{C}(R_1), \mathcal{C}(R_2))}
\end{equation}
where $\rho(\cdot, \cdot)$ is the Pearson correlation w.r.t a given distribution over transitions, and  $\mathcal{C}(R)$ is the \emph{canonically shaped reward}, defined as:
$$
\mathcal{C}(R)(s, a, s') = R(s, a, s') + \mathbb{E}[\gamma R(s', A, S') - R(s, a, S') - \gamma R(S, A, S')].
$$
\end{definition}

For goal-based tasks, we have a reward function $R(s, a, s')= R(s') = \mathbbm{1} _{S_{\mathcal{T}}}(s')$, which assigns a reward of $1$ to ``goal'' states and $0$ to ``non-goal'' states based on the task $\mathcal{T}$.  In our experiments, we focus on goal-based tasks because they are most straightforward to specify using image-text encoder VLMs. We expect future models to be able to provide rewards for a more general class of tasks, e.g., using video encoders.
For goal-based tasks computing the EPIC distance is particularly convenient.

\begin{lemma}[EPIC distance for CLIP reward model]
    \label{lem:clip-pearson-corr}
    Let $(\text{CLIP}_I, \text{CLIP}_L)$ be a pair of state and task encoders as defined in Section~\ref{sec:vlm-rew}. Let $R_\text{CLIP}$ be the CLIP reward function as defined in \cref{eq:clip-rew}, and $R(s)=\mathbbm{1} _{S_{\mathcal{T}}}(s)$ be the ground truth reward function, where $S_{\mathcal{T}}$ is the set of goal states for our task $l$.
    Let $\mu$ be a probability measure in the state space, let $\rho(\cdot, \cdot)$ be the Pearson correlation under measure $\mu$ and $\Var(\cdot)$ the variance under measure $\mu$.
    Then, we can compute the EPIC distance of a CLIP reward model and the ground truth reward as:
    \[
    D_{\mathrm{EPIC}}=\frac{1}{\sqrt{2}}\sqrt{1-\rho(R_\text{CLIP}, R)},
    \]
    \begin{align*}
    \resizebox{\linewidth}{!}{$
        \rho(R_\text{CLIP}, R) 
        =\frac{\sqrt{\Var(R)}}{\sqrt{\Var(R_\text{CLIP})}} \left ( \text{CLIP}_L(l)\cdot  \left (\int_{S_{\mathcal{T}}} \text{CLIP}_I(\psi(s))\mathrm{d}\mu(s) - \int_{S_{\mathcal{T}}^C} \text{CLIP}_I(\psi(s)) \mathrm{d}\mu(s)\right )\right ),
    $}
    \end{align*}
    where $S_{\mathcal{T}}^C=\mathcal S \setminus S_{\mathcal{T}}$ .
\end{lemma}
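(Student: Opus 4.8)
The statement has two ingredients, and the plan is to establish (i) and (ii) in turn: (i) that the canonicalization operator $\mathcal C$ from the definition of $D_{\mathrm{EPIC}}$ may be dropped here, so that $D_{\mathrm{EPIC}}=\frac{1}{\sqrt2}\sqrt{1-\rho(R_{\text{CLIP}},R)}$; and (ii) the closed form for $\rho(R_{\text{CLIP}},R)$.

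For (i), the key observation is that both reward functions depend only on a single state: $R(s)=\mathbbm{1}_{S_{\mathcal T}}(s)$ by hypothesis, and $R_{\text{CLIP}}(s)$ depends on $s$ only through $\psi(s)$. Substituting a state-only reward $R(s,a,s')=f(s')$ into $\mathcal C(R)(s,a,s')=R(s,a,s')+\E[\gamma R(s',A,S')-R(s,a,S')-\gamma R(S,A,S')]$, the terms $\gamma R(s',A,S')=\gamma f(S')$ and $\gamma R(S,A,S')=\gamma f(S')$ cancel, leaving $\mathcal C(R)(s,a,s')=f(s')-\E[f(S')]$, i.e.\ $R$ shifted by a constant independent of $(s,a,s')$ (the same computation applies verbatim to $R_{\text{CLIP}}$). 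Since the Pearson correlation is invariant under adding a constant to either argument, $\rho(\mathcal C(R_{\text{CLIP}}),\mathcal C(R))=\rho(R_{\text{CLIP}},R)$, where the correlation is over the next-state marginal of the transition distribution; identifying this marginal with $\mu$ gives the first displayed equation.

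For (ii), I would write $\rho(R_{\text{CLIP}},R)=\Cov_\mu(R_{\text{CLIP}},R)/(\sqrt{\Var_\mu(R_{\text{CLIP}})}\,\sqrt{\Var_\mu(R)})$ and exploit that $R$ is an indicator. Setting $\mathbf{g}=\text{CLIP}_L(l)/\|\text{CLIP}_L(l)\|$ and $\mathbf{s}(s)=\text{CLIP}_I(\psi(s))/\|\text{CLIP}_I(\psi(s))\|$ so that $R_{\text{CLIP}}(s)=\mathbf{g}\cdot\mathbf{s}(s)$, we have $\E_\mu[R_{\text{CLIP}}\mathbbm{1}_{S_{\mathcal T}}]=\int_{S_{\mathcal T}}R_{\text{CLIP}}\,\mathrm d\mu$ and $\E_\mu[R_{\text{CLIP}}]=\int_{S_{\mathcal T}}R_{\text{CLIP}}\,\mathrm d\mu+\int_{S_{\mathcal T}^C}R_{\text{CLIP}}\,\mathrm d\mu$, while $\E_\mu[\mathbbm{1}_{S_{\mathcal T}}]$ and $\Var_\mu(R)=\mu(S_{\mathcal T})(1-\mu(S_{\mathcal T}))$ depend only on $\mu(S_{\mathcal T})$. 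Rearranging the covariance collapses the two integrals into the difference $\int_{S_{\mathcal T}}R_{\text{CLIP}}\,\mathrm d\mu-\int_{S_{\mathcal T}^C}R_{\text{CLIP}}\,\mathrm d\mu$, and the $\mu(S_{\mathcal T})$-factors recombine with $\Var_\mu(R)$ to produce the prefactor $\sqrt{\Var(R)}/\sqrt{\Var(R_{\text{CLIP}})}$. Finally, bilinearity of the inner product lets the fixed unit vector $\mathbf{g}$ be pulled out of each integral, $\int_{S_{\mathcal T}}R_{\text{CLIP}}\,\mathrm d\mu=\mathbf{g}\cdot\int_{S_{\mathcal T}}\mathbf{s}(s)\,\mathrm d\mu(s)$, which assembles into the claimed expression (with $\text{CLIP}_I,\text{CLIP}_L$ there read as the normalized encodings).

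The one genuinely delicate step is (i): EPIC's canonicalization has to be applied correctly, and one must keep straight the three distributions in play --- the auxiliary state/action distributions inside $\mathcal C$, the ``coverage'' distribution defining the Pearson correlation on transitions, and the state measure $\mu$ --- to be sure that $\mathcal C$ contributes nothing beyond an irrelevant constant shift. After that the argument is bookkeeping: the indicator structure of the ground-truth reward converts expectations into integrals over $S_{\mathcal T}$ and $S_{\mathcal T}^C$, and the dot product is linear; the only remaining care is to fix the normalization convention for $\mu$ (equivalently, whether the integrals in the final expression are taken conditionally on $S_{\mathcal T}$ and $S_{\mathcal T}^C$) so that the constants match the stated formula exactly.
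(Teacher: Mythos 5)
Your proposal follows essentially the same route as the paper's proof: first observe that for state-only rewards the canonicalization $\mathcal{C}$ reduces to a constant shift, so location-invariance of Pearson correlation gives $\rho(\mathcal{C}(R_{\text{CLIP}}),\mathcal{C}(R))=\rho(R_{\text{CLIP}},R)$, and then exploit the Bernoulli/indicator structure of $R$ to reduce the correlation to $\frac{\sqrt{\Var(R)}}{\sqrt{\Var(R_{\text{CLIP}})}}$ times the difference of goal versus non-goal expectations of $R_{\text{CLIP}}$, pulling the language embedding out of the integrals by linearity. Your closing remark about fixing the conditional-versus-unconditional integral convention (and reading the encoders as normalized) is exactly the bookkeeping the paper itself glosses over, so the argument is correct and matches theirs.
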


\begin{proof}
First, note that for reward functions where the reward of a transition $(s, a, s')$ only depends on $s'$, the canonically-shaped reward simplifies to:
\begin{align*}
\mathcal{C}(R)(s') &= R(s') + \gamma\mathbb{E}[R(S')] - \mathbb{E}[R(S')] - \gamma\mathbb{E}[R(S')]\\
&= R(s') - \mathbb{E}[R(S')].
\end{align*}
Hence, because the Pearson correlation is location-invariant, we have
\[
\rho (\mathcal{C}(R_1), \mathcal{C}(R_2))=\rho(R_1,R_2).
\]

Let $p=\mathbb{P}(Y=1)$ and recall that $\mathrm{Var}[Y]=p(1-p)$. Then, we can simplify the Pearson correlation between continuous variable $X$ and Bernoulli random variable $Y$ as:
\begin{align*}
\rho(X,Y)&\coloneqq\frac{\mathrm{Cov}[X,Y]}{\sqrt{\mathrm{Var}[X]}\sqrt{\mathrm{Var}[Y]}}
=\frac{\mathbb{E}[XY]-\mathbb{E}[X]\mathbb{E}[Y]}{\sqrt{\mathrm{Var}[X]}\sqrt{\mathrm{Var}[Y]}}
=\frac{\mathbb{E}[X|Y=1]p-\mathbb{E}[X]p}{\sqrt{\mathrm{Var}[X]}\sqrt{\mathrm{Var}[Y]}}\\
&=\frac{\mathbb{E}[X|Y=1]p-\mathbb{E}[X|Y=1]p^2-\mathbb{E}[X|Y=0](1-p)p}{\sqrt{\mathrm{Var}[X]}\sqrt{\mathrm{Var}[Y]}}\\
&=\frac{\mathbb{E}[X|Y=1]p(1-p)-\mathbb{E}[X|Y=0](1-p)p}{\sqrt{\mathrm{Var}[X]}\sqrt{\mathrm{Var}[Y]}}\\
&=\frac{\sqrt{\mathrm{Var}[Y]}}{\sqrt{\mathrm{Var}[X]}}\left(\mathbb{E}[X|Y=1]-\mathbb{E}[X|Y=0]\right).
\end{align*}

Combining both results, we obtain that:
$$
\rho (\mathcal{C}(R_{\text{CLIP}}), \mathcal{C}(R))=\frac{\sqrt{\Var(R)}}{\sqrt{\Var(R_\text{CLIP})}}\left(
\mathbb{E}_{\mathcal{S}_{\mathcal{T}}}[R_{\text{CLIP}}]-
\mathbb{E}_{\mathcal{S}_{\mathcal{T}}^C}[R_{\text{CLIP}}]
\right)
$$

\end{proof}

If our ground truth reward function is of the form $R(s)=\mathbbm{1} _{S_{\mathcal{T}}}(s)$ and we denote $\pi^*_R$ as the optimal policy for reward function $R$, then the quality of $\pi^*_{R_\text{CLIP}}$ depends entirely on the Pearson correlation $\rho(R_\text{CLIP}, R)$. If $\rho(R_\text{CLIP}, R)$ is positive, the cosine similarity of the task embedding with embeddings for goal states $s \in S_{\mathcal{T}}$ is higher than that with embeddings for non-goal states $s \in S_{\mathcal{T}}^C$. Intuitively, $\rho(R_\text{CLIP}, R)$ is a measure of how well CLIP separates goal states from non-goal states.

In practice, we use \Cref{lem:clip-pearson-corr} to evaluate EPIC distance between a CLIP reward model and a ground truth reward function.

Note that the EPIC distance depends on a state distribution $\mu$ (see \citet{gleave2021quantifying} for further discussion). In our experiment, we use either a uniform distribution over states (for the toy RL environments) or the state distribution induced by a pre-trained expert policy (for the humanoid experiments). More details on how we collected the dataset for evaluating EPIC distances can be found in the \Cref{app:evaluation}.

\section{Human Evaluation}\label{app:evaluation}

Evaluation on tasks for which we do not have a reward function was done manually by one of the authors, depending on the amount of time the agent met the criteria listed in \Cref{tab:human-evaluation}. See \Cref{fig:human_labels,fig:human_labels_scaling} for the raw labels obtained about the agent performance.

We further evaluated the impact of goal-baseline regularization on the humanoid tasks that did not succeed in our experiments with $\alpha=0$, cf. \Cref{fig:alpha_effect_humanoid_histogram}. In these cases, goal baseline regularization does not improve performance. Together with the results in \Cref{fig:scale_gbr}, this could suggest that goal-baseline regularization is more useful for smaller CLIP models than for larger CLIP models. Alternatively, it is possible that the improvements to the reward model obtained by goal-baseline regularization are too small to lead to noticeable performance increases in the trained agents for the failing humanoid tasks. Unfortunately, a more thorough study of this was infeasible due to the cost associated with human evaluations.

Our second type of human evaluation is to compute the EPIC distance of a reward model to a pre-labelled set of states. To create a dataset for these evaluations, we select all checkpoints from the training run with the highest VLM-RM reward of the largest and most capable VLM we used. We then collect rollouts from each checkpoint and collect the images across all timesteps and rollouts into a single dataset. We then have a human labeller (again an author of this paper) label each image according to whether it represents the goal state or not, using the same criteria from \Cref{tab:human-evaluation}. We use such a dataset for \Cref{fig:scaling-laws}. \Cref{fig:reward-distr-sep} shows a more detailed breakdown of the EPIC distance for different model scales.

\begin{table}
\centering
\begin{tabular}{p{0.20\linewidth} p{0.65\linewidth}}
    \toprule
    Task & Condition \\
    \midrule
    Kneeling & Agent must be kneeling with both knees touching the floor. Agent must not be losing balance nor kneeling in the air. \\
    Lotus position & Agent seated down in the lotus position. Both knees are on the floor and facing outwards, while feet must be facing inwards. \\
    Standing up & Agent standing up without falling. \\
    Arms raised & Agent standing up with both arms raised. \\
    Doing splits & Agent on the floor doing the side splits. Legs are stretched on the floor. \\
    Hands on hips & Agent standing up with both hands on the base of the hips. Hands must not be on the chest. \\
    Arms crossed & Agent standing up with its arms crossed. If the agent has its hands just touching but not crossing, it is not considered valid. \\
    Standing on one leg & Agent standing up touching the floor only with one leg and without losing balance. Agent must not be touching the floor with both feet. \\
    \bottomrule
\end{tabular}
\caption{Criteria used to evaluate videos of rollouts generated by the policies trained using CLIP rewards on the humanoid environment. A rollout is considered a success if the agent satisfies the condition for the task at least 50\% of the timesteps, and a failure otherwise.}
\label{tab:human-evaluation}
\end{table}

\begin{figure}
    \centering
    \includegraphics[width=\linewidth]{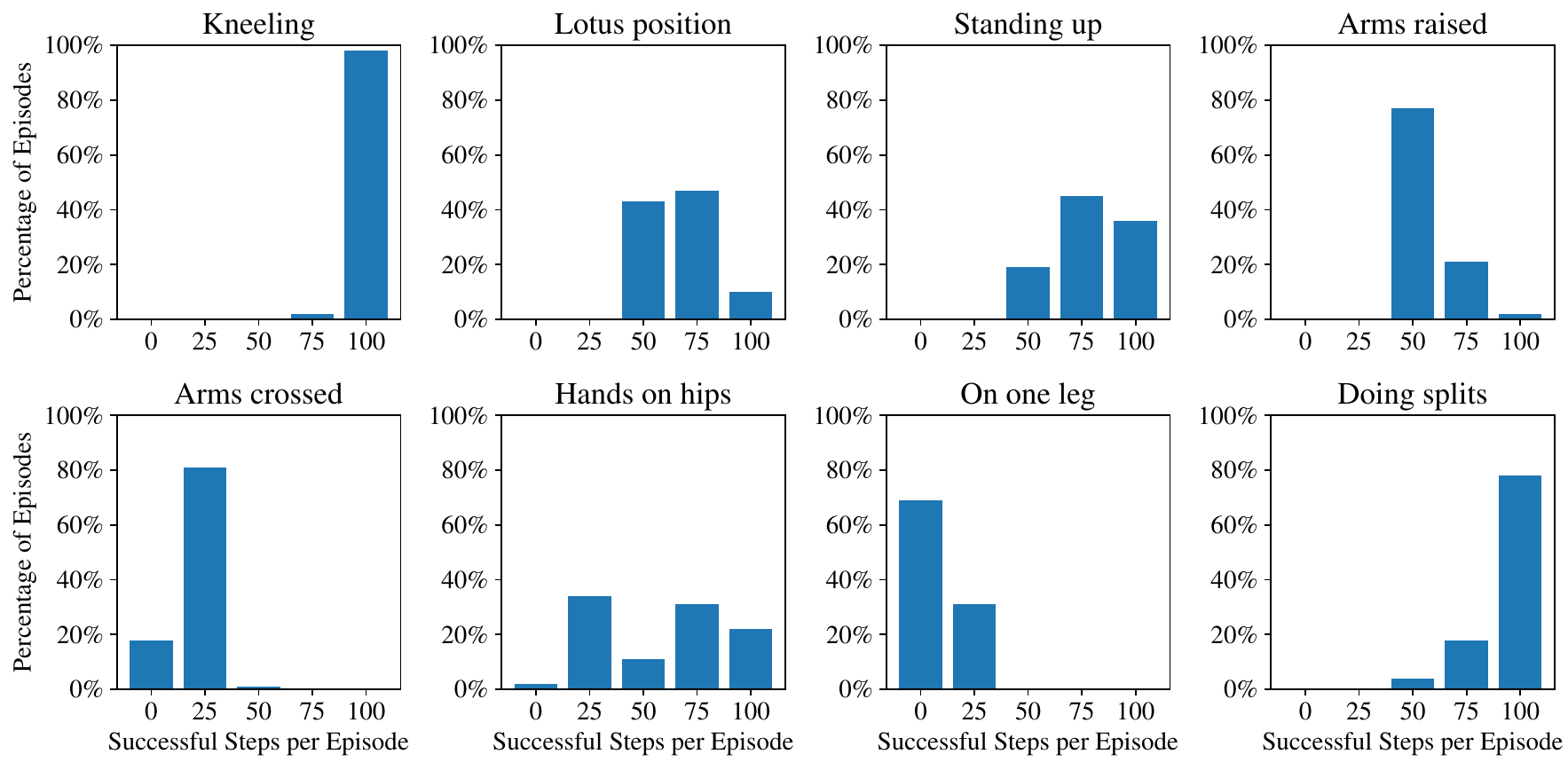}
    \caption{Raw results of our human evaluations. Each histogram is over $100$ trajectories sampled from the final policy. One human rater labeled each trajectory in one of five buckets according to whether the agent performs the task correctly $0, 25, 50, 75,$ or $100$ steps out of an episode length of $100$.
    To compute the success rate in the main paper, we consider all values above $50$ steps as a ``success''.}
    \label{fig:human_labels}
\end{figure}

\begin{figure}
    \centering
    \includegraphics[width=\linewidth]{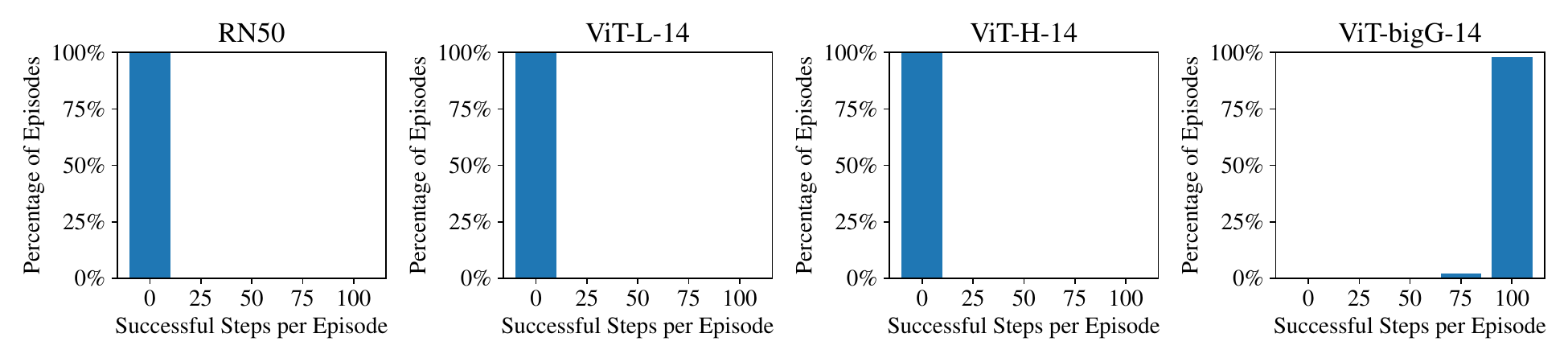}
    \caption{Raw results of our human evaluations for the model scaling experiments. The histograms are computed the same way as in \Cref{fig:human_labels}, but the agents were trained for differently sized CLIP models on the humanoid ``kneeling'' task. As the aggregated results in \Cref{fig:scale_success} in the main paper suggest, there is a stark difference between the agent trained using the \texttt{ViT-H-14} model and the \texttt{ViT-bigG-14} model.}
    \label{fig:human_labels_scaling}
\end{figure}

\begin{figure}
    \centering
    \includegraphics[width=\linewidth]{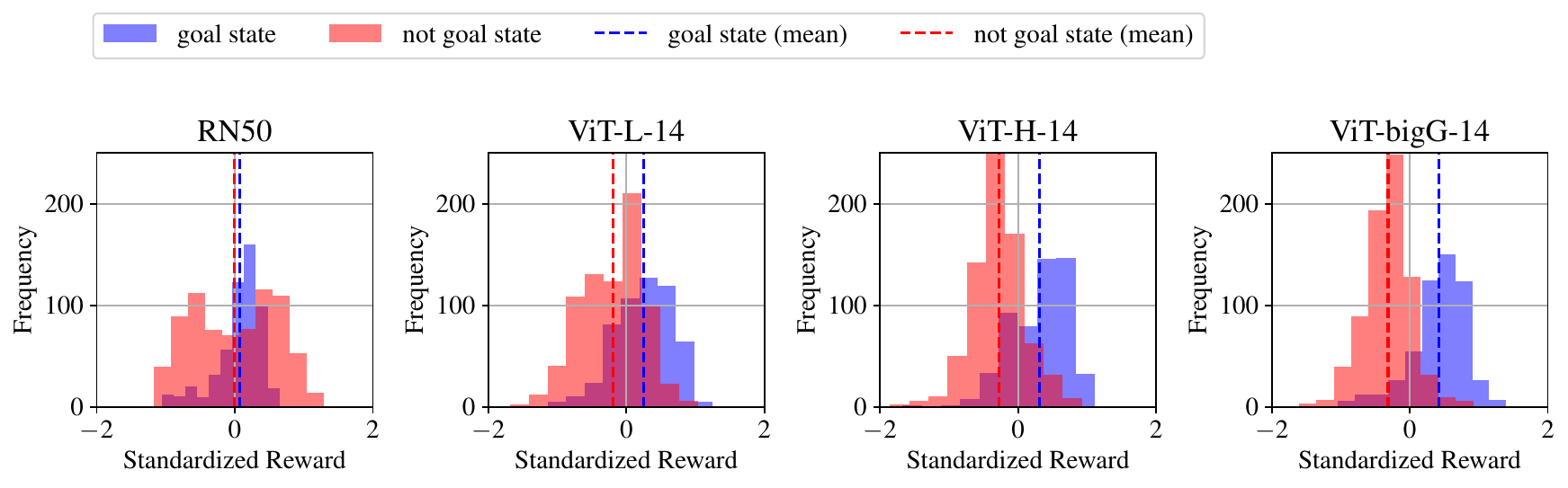}
    \caption{The rewards distributions of (human labelled) goal states vs. non-goal states become more separated with the scale of the VLM. We show histograms of the CLIP rewards for differently labelled states in the humanoid ``kneeling'' task. The separation between the dotted lines, showing the average of each distribution, is the Pearson correlation described in \Cref{app:epic}. This provides a clear visual representation of the VLM's capability.}
    \label{fig:reward-distr-sep}
\end{figure}

\begin{figure}
    \centering
    \includegraphics[width=\linewidth]{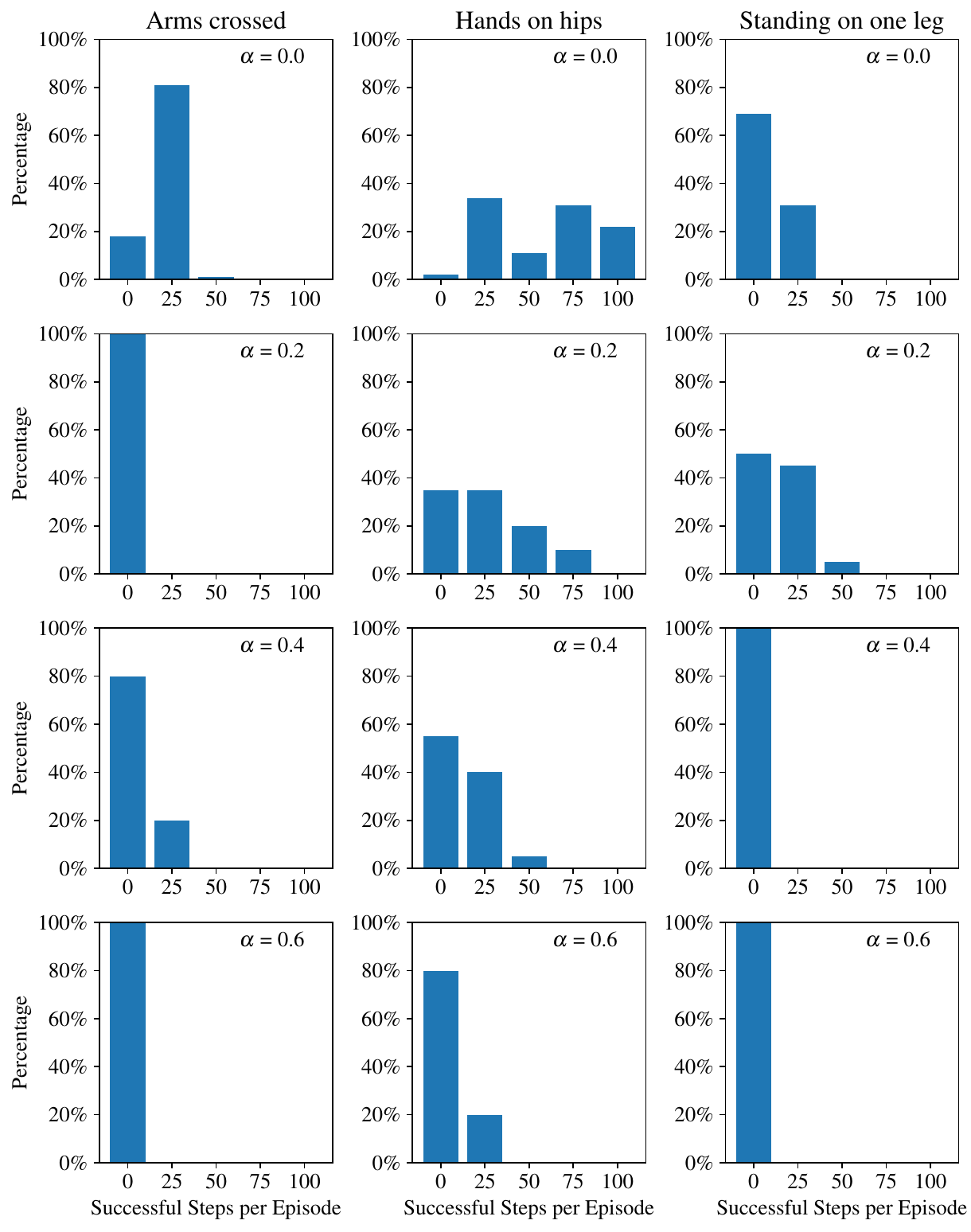}
    \caption{Human evaluations for evaluating goal-baseline regularization in humanoid tasks. The histograms are computed the same way as in \Cref{fig:human_labels}.
    We show the humanoid ``arms crossed'', ``hands on hips'', and ``standing on one leg'' tasks that failed in our experiments with $\alpha=0$. Each column shows one of the tasks and the rows show regularization strength values $\alpha = 0.0, 0.2, 0.4, 0.6$.
    The performance for $\alpha=0$ and $\alpha=0.2$ seems comparable and larger values for $\alpha$ degrade performance. Overall, we don't find goal-baseline regularization leads to better performance on these tasks.
    }
    \label{fig:alpha_effect_humanoid_histogram}
\end{figure}

\section{Implementation Details \& Hyperparameter Choices} \label{app:implementation-details}

\begin{algorithm}[t]
\caption{SAC with CLIP reward model.}\label{algo:sac-clip}
\begin{algorithmic}
\Require Task description $l$, encoders $\text{CLIP}_L$ and $\text{CLIP}_I$, batchsize $B$
\State Initialize SAC algorithm
\State $x_l \gets \text{CLIP}_L(l)$ \Comment{Precompute task embedding}
\State $\mathcal{B} \gets [], ~~ \mathcal{D} \gets []$  \Comment{Initialize buffers}
\Repeat
    \State Sample transition $(s_t, a_t, s_{t+1})$ using current policy
    \State Append $(s_t, a_t, s_{t+1})$ to unlabelled buffer $\mathcal{B}$
    \If{$|\mathcal{B}| \geq |B|$}
        \For{$(s_t, a_t, s_{t+1})$ in $\mathcal{B}$}\Comment{In practice this loop is batched}
            \State $x_s \gets \text{CLIP}_I(\psi(s))$ \Comment{Compute state embedding}
            \State ${R_{\text{CLIP}}}_t \gets x_l \cdot x_s / \left( \| x_l \| \cdot \| x_s \| \right)$ \Comment{Compute CLIP reward}
            \State Optionally apply goal-baseline regularization (\Cref{def:goal-baseline-reg})
            \State Remove $(s_t, a_t, s_{t+1})$ from unlabelled buffer $\mathcal{B}$
            \State Append $(s_t, a_t, {R_{\text{CLIP}}}_t, s_{t+1})$ to  labelled buffer $\mathcal{D}$
        \EndFor
    \EndIf
    \State Perform standard SAC gradient step using replay buffer $\mathcal{D}$
\Until{convergence}
\end{algorithmic}
\end{algorithm}

In this section, we describe implementation details for both our toy RL environment experiments and the humanoid experiments, going into further detail on the experiment design, any modifications we make to the simulated environments, and the hyperparameters we choose for the RL algorithms we use.

\Cref{algo:sac-clip} shows pseudocode of how we integrate computing CLIP rewards with a batched RL algorithm, in this case SAC.

\begin{table}
\centering
\begin{tabular}{p{0.01\linewidth} p{0.20\linewidth} p{0.43\linewidth} p{0.23 \linewidth}}
    \toprule
    & Task & Goal Prompt & Baseline Prompt \\
    \midrule
    & \texttt{CartPole} & ``pole vertically upright on top of the cart'' & ``pole and cart'' \\
    \midrule
    & \texttt{MountainCar} & ``a car at the peak of the mountain, next to the yellow flag'' & ``a car in the mountain'' \\
    \midrule
    \multirow{12}{*}{\rotatebox[origin=c]{90}{\texttt{Humanoid}}}
    & Kneeling &  ``a humanoid robot kneeling'' & ``a humanoid robot'' \\
    & Lotus position & ``a humanoid robot seated down, meditating in the lotus position'' & ``a humanoid robot'' \\
    & Standing up & ``a humanoid robot standing up'' & ``a humanoid robot'' \\
    & Arms raised & ``a humanoid robot standing up, with both arms raised'' & ``a humanoid robot''\\
    & Doing splits & ``a humanoid robot practicing gymnastics, doing the side splits'' & ``a humanoid robot'' \\
    & Hands on hips & ``a humanoid robot standing up with hands on hips'' & ``a humanoid robot'' \\
    & Arms crossed & ``a humanoid robot standing up, with its arms crossed'' & ``a humanoid robot''\\
    & Standing on one leg &  ``a humanoid robot standing up on one leg'' & ``a humanoid robot'' \\
    \bottomrule
\end{tabular}
\caption{Goal and baseline prompts for each environment and task. Note that we did not perform prompt engineering, these are the first prompts we tried for every task.}
\label{tab:prompts}
\end{table}

\subsection{Classic Control Environments}

\paragraph{Environments.} We use the standard \texttt{CartPole} and \texttt{MountainCar} environments implemented in Gym, but remove the termination conditions. Instead the agent receives a negative reward for dropping the pole in \texttt{CartPole} and a positive reward for reaching the goal position in \texttt{MountainCar}. We make this change because the termination leaks information about the task completion such that without removing the termination, for example, any positive reward function will lead to the agent solving the \texttt{CartPole} task. As a result of removing early termination conditions, we make the goal state in the \texttt{MountainCar} an absorbing state of the Markov process. This is to ensure that the estimated returns are not affected by anything a policy might do after reaching the goal state. Otherwise, this could, in particular, change the optimal policy or make evaluations much noisier.

\paragraph{RL Algorithms.} We use DQN \citep{mnih2015human} for \texttt{CartPole}, our only environment with a discrete action space, and SAC \citep{haarnoja2018soft}, which is designed for continuous environments, for \texttt{MountainCar}. For both algorithms, we use a standard implementation provided by \texttt{stable-baselines3} \citep{stable-baselines3}.

\paragraph{DQN Hyperparameters.} We train for $3$ million steps with a fixed episode length of $200$ steps, where we start the training after collecting $75000$ steps. Every $200$ steps, we perform $200$ DQN updates with a learning rate of $2.3e-3$.  We save a model checkpoint every $64000$ steps. The Q-networks are represented by a $2$ layer MLP of width $256$.

\paragraph{SAC Hyperparameters.} We train for $3$ million steps using SAC parameters $\tau=0.01$, $\gamma=0.9999$, learning rate $10^-4$ and entropy coefficient $0.1$. The policy is represented by a $2$ layer MLP of width $64$. All other parameters have the default value provided by \texttt{stable-baselines3}.

We chose these hyperparameters in preliminary experiments with minimal tuning.

\subsection{Humanoid Environment}

For all humanoid experiments, we use SAC with the same set of hyperparameters tuned on preliminary experiments with the kneeling task. We train for $10$ million steps with an episode length of $100$ steps. Learning starts after $50000$ initial steps and we do $100$ SAC updates every $100$ environment steps. We use SAC parameters $\tau=0.005$, $\gamma=0.95$, and learning rate $6\cdot 10^{-4}$. We save a model checkpoint every $128000$ steps. For our final evaluation, we always evaluate the checkpoint with the highest training reward. We parallelize rendering over 4 GPUs, and also use batch size $B=3200$ for evaluating the CLIP rewards.

\end{document}